\setlist[itemize]{label=--}
\setlist[enumerate]{label=(\arabic*),labelindent=\parindent,leftmargin=*}
\definecolor{citecolor}{HTML}{0000C0}
\definecolor{urlcolor}{HTML}{000080}
\newtheorem{theorem}{Theorem}
\newtheorem{lemma}[theorem]{Lemma}
\newtheorem{definition}[theorem]{Definition}
\theoremstyle{remark}
\newcommand{\namedref}[2]{\hyperref[#2]{#1~\ref*{#2}}}
\newcommand{\equationref}[1]{\hyperref[#1]{Eq~(\ref*{#1})}}
\newcommand{\theoremref}[1]{\hyperref[#1]{Theorem~\ref*{#1}}}
\newcommand{\lemmaref}[1]{\hyperref[#1]{Lemma~\ref*{#1}}}
\newcommand{\remarkref}[1]{\hyperref[#1]{Remark~\ref*{#1}}}
\newcommand{\Reals}{\mathbb{R}}
\newcommand{\size}[1]{\left| #1 \right|}
\newcommand{\norm}[1]{\lVert #1 \rVert}
\newcommand{\cc}{\ensuremath{\mathsf{CC}}\xspace}
\newcommand{\rcc}{\ensuremath{\mathsf{RCC}}\xspace}
\newcommand{\dcc}{\ensuremath{\mathsf{D}}\xspace}
\newcommand{\ed}{\ensuremath{\mathsf{ED}}\xspace}
\newcommand{\avgp}[4]{\mathsf{MEAN}_{#1,#2}^{#3,#4}}
\newcommand{\avg}{\avgp{d}{N}{\varepsilon}{\beta}}
\newcommand{\E}{\mathbb{E}}
\DeclareMathOperator{\Var}{Var}
\DeclareMathOperator{\enc}{enc}
\DeclareMathOperator{\dec}{dec}
\newenvironment{myabstract}
{\list{}{\listparindent 1.5em%
		\itemindent    \listparindent
		\leftmargin    1cm
		\rightmargin   1cm
		\parsep        0pt}%
	\item\relax}
{\endlist}
\newenvironment{mycover}
{\list{}{\listparindent 0pt
		\itemindent    \listparindent
		\leftmargin    1cm
		\rightmargin   1cm
		\parsep        0pt}%
	\raggedright
	\item\relax}
{\endlist}
\newcommand{\myemail}[1]{\,$\cdot$\, {\small #1}}
\newcommand{\myaff}[1]{\,$\cdot$\, {\small #1}\par\medskip}
\begin{document}

\begin{mycover}
	{\huge\bfseries\boldmath Towards Tight Communication Lower Bounds for Distributed Optimisation \par}
	\bigskip
	\bigskip
    \textbf{Dan Alistarh}
    \myemail{dan.alistarh@ist.ac.at}
    \myaff{IST Austria}
    
    \textbf{Janne H.\ Korhonen}
    \myemail{janne.korhonen@ist.ac.at}
    \myaff{IST Austria}
\end{mycover}

\medskip
\begin{myabstract}
  \noindent\textbf{Abstract.}
We consider a standard distributed optimisation setting where $N$ machines, each holding a $d$-dimensional function $f_i$, aim to jointly minimise the sum of the functions $\sum_{i = 1}^N f_i (x)$.  This problem arises naturally in  large-scale distributed optimisation, where a standard solution is to apply variants of  (stochastic) gradient descent. We focus on the \emph{communication complexity} of this problem: our main result provides the first fully unconditional bounds on total number of bits which need to be sent and received by the $N$ machines to solve this problem under point-to-point communication, within a given error-tolerance. 
Specifically, we show that $\Omega( Nd \log d / N\varepsilon)$ total bits need to be communicated between the machines to find an additive $\epsilon$-approximation to the minimum of $\sum_{i = 1}^N f_i (x)$. The result holds for both deterministic and randomised algorithms, and, importantly, requires no assumptions on the algorithm structure. The lower bound is tight under certain restrictions on parameter values, and is matched within constant factors for quadratic objectives by a new variant of quantised gradient descent, which we describe and analyse. Our results bring over tools from communication complexity to distributed optimisation, which has potential for further  applications.
\bigskip
\end{myabstract}

\thispagestyle{empty}

\section{Introduction}

The ability to efficiently distribute large-scale optimisation over several computing nodes has been one of the key  enablers of recent progress in machine learning, and the last decade has seen significant attention dedicated to efficient distributed optimisation. 
One specific area of focus has been on reducing the \emph{communication cost} of distributed machine learning, i.e.~the total number of bits sent and received by machines in order to jointly optimise an objective function. To this end, communication-efficient variants are known for most classical optimisation algorithms, and in fact entire \emph{families} of communication-compression methods have been introduced in the last decade.

We consider a standard setting in which  $N$ machines communicate by sending point-to-point binary messages to each other. Given dimension $d$, and a domain  $\mathbb{D} \subseteq \mathbb{R}^d$, each machine $i$ is given an input function $f_i \colon \mathbb{D} \to \Reals$, corresponding to a subset of the data, and the machines need to jointly minimise the empirical risk 
$\sum_{i=1}^N f_i(x)$ 
with either deterministic or probabilistic guarantees on the output, within $\varepsilon$ additive error tolerance. That is, \emph{at least one node} needs to output $z \in [0,1]^d$ such that
\begin{equation} 
\label{eqn:main}
\sum_{i = 1}^N f_i(z) \le \inf_{x \in [0,1]^d} \sum_{i = 1}^N f_i(x) + \varepsilon\,.
\end{equation}

 This setting models data-parallel optimisation, and covers virtually all practical settings, from large-scale regression, to the training of deep neural networks.

The key parameters governing communication complexity are the problem dimension $d$, the solution accuracy $\varepsilon$, and the number of machines $N$.  
Most communication-efficient approaches can be linked to (at least) one of the following strategies: \emph{dimensionality-reduction} methods, such as the \emph{sparsification} of model updates~\citep{islamov2021distributed, karimireddy2019error, alistarh2018convergence, gorbunov2020linearly, khirirat18}, or \emph{projection}~\citep{chilimbi2014project, vogels2019powersgd, wang2018atomo},   
which attempt to reduce the dependency on the parameter $d$, \emph{quantisation} methods~\citep{suresh2017distributed, alistarh2017qsgd, alistarh2020distributed, ramezani2021nuqsgd}, whose rough goal is to improve the dependency on the accuracy $\varepsilon$, and \emph{communication-reduction methods} such as reducing the \emph{frequency of communication}~\citep{wang2020overlap, stich2018local, basu2019qsparse} relative to the number of optimisation steps, or communicating via point-to-point messages via, e.g., \emph{gossiping}~\citep{koloskova2019decentralized, lian2017can}. 

Although these methods use a diverse range of algorithmic ideas, upon close inspection, they all appear to have a worst-case total communication cost of at least ${ N d \log (d / \varepsilon ) }$ bits, even for simple convex $d$-dimensional problems, and even for variations of the above standard setting. For instance, even though dimensionality-reduction or quantisation methods might send \emph{asymptotically less} than $d$ bits per algorithm iteration, they have to compensate for this in the worst case by running for asymptotically more iterations. (See e.g.~\cite{alistarh2017qsgd} for a simple example of this trade-off.)  
It is therefore natural to ask whether this complexity threshold is inherent, or whether it can be circumvented via improved algorithmic techniques. This is our motivating question. 

A partial answer is given by the foundational work of Tsitsiklis and Luo~\citep{tsitsiklis1987communication}, who gave a lower bound of $\Omega(d \log (d / \varepsilon))$ in the case where \emph{two nodes} communicate to optimise over quadratic functions. 
Their argument works by counting the total number of possible $\varepsilon$-approximate solutions  in the input volume: communication has to be at least the logarithm of this number. 
Subsequent work has considered more complex input functions, e.g.~\cite{NIPS2013_4902}, or stronger notions of approximation~\cite{vempala2020communication}. The original argument generalises directly to $N$ nodes under the strong assumption that \emph{each node has to return the correct output}: in this case, communication complexity is asymptotically $N$ times the 2-node cost~\citep{alistarh2020distributed}. 

In this context, it is surprisingly still unknown whether the $\Omega(Nd \log (d / \varepsilon))$ communication threshold is actually inherent for distributed optimisation in the standard case where only a single node needs to return the output. 
This question is not just of theoretical interest, since there are many practical settings in large-scale optimisation, such as \emph{federated learning}~\citep{konevcny2016federated} or the \emph{parameter server} setting~\citep{PS},  where only a single node coordinates the optimisation, and knows the final answer. 
This raises the question whether more communication-efficient algorithms are possible in such settings, in which communication cost is a key concern. 
At the same time, it is also not clear under which conditions algorithms achieving this asymptotic complexity exist. 

\subsection{Contribution}

In this paper, we take a significant step towards addressing these questions. 
Our main result is the first \emph{unconditional} lower bound on the communication complexity of distributed optimisation in the setting discussed above, showing that it is impossible to obtain a significant improvement in communication cost even if only one node (the ``coordinator'') learns the final output. 
Specifically, even if the input functions $f_i$ at the nodes are promised to be quadratic functions $x \mapsto \beta_0 \norm{ x - x^*}^2_2$ for some constant $\beta_0 > 0$, then any deterministic or randomised algorithm where at least one node learns a solution to (\ref{eqn:main}) requires 
\[ \Omega\Bigl( Nd \log \frac{\beta d}{N \varepsilon} \Bigr) \text{ total bits to be communicated} \]
for $\beta = \beta_0 N$, as long as parameters satisfy $\beta d / N^2 \varepsilon = \Omega(1)$. We emphasise that the lower bound requires \emph{no} assumptions on the structure of the algorithm or amount of local computation. We also note that in most practical settings, the parameter dependency requirement is satisfied, as the number of parameters $d$ is significantly larger than the number of machines $N$ multiplied by the error tolerance $\varepsilon$ -- moreover, a non-trivial dependence between $\beta$, $d$, $N$ and $\varepsilon$ is required for the lower bound to hold. We discuss this in detail below.

Our results start from the classic idea of linking communication complexity with the number of quadratic functions with distinct minima in the domain~\cite{tsitsiklis1987communication}. To extend this approach to  \emph{randomised (stochastic)} algorithms and to the multi-node case $N > 2$, we build new connections to results and techniques from \emph{communication complexity}~\cite{kushilevitz_nisan_1996}. Such connections have not to our knowledge been explored in the context of (real-valued) optimisation tasks. 
Our work thus provides a template and a basic toolkit for applying communication complexity results to distributed optimisation.
As further applications, we improve the main lower bound to $\Omega(N d \log (\beta d / \varepsilon) )$ for the deterministic case if some node is required to output both the approximate minimiser $z$ and the value of the sum $\sum_{i = 1}^N f_i(z)$, as well as prove stronger lower bounds in the more challenging \emph{non-convex} case (see Section~\ref{sec:quadratic-det-lb} and Appendix~\ref{sec:non-convex}.)

To complement this lower bound, we show that for strongly convex and strongly smooth functions, distributed optimisation can be done using deterministic quantised gradient descent with
\[ O\Bigl(Nd \kappa\log\kappa \log\frac{\beta d}{\varepsilon}\Bigr) \text{ total bits communicated,} \]
where $\sum_{i = 1}^N f_i$ is $\alpha$-strongly convex and $\beta$-strongly smooth, and $\kappa = \beta/\alpha$ is the condition number. This is, to our knowledge, the first tight upper bound for communication cost of quantised gradient descent on quadratic functions, as well as the first upper bound that does not require all-to-all broadcast. In particular, for constant condition number $\kappa$, this matches our main lower bound when $d \gg N$, e.g. $d = \Omega(N^{2 + \delta})$ for constant $\delta > 0$.

Our algorithm builds on prior quantised gradient descent implementations~\citep{magnusson2019maintaining,alistarh2017qsgd}, however, to achieve a tight bound, we need to (a)~ensure that our gradient quantisation is sufficiently parsimonious, using $O(d \log \kappa)$ bits per gradient, and (b)~avoid all-to-all exchange of gradients. For (a), we specialise a recent lattice-based quantisation scheme which allows arbitrary centring of iterates~\citep{alistarh2020distributed}, and for (b), we use two-stage quantisation approach, where the nodes first send their quantised gradients to the coordinator, and the coordinator then broadcasts the carefully quantised sum back to nodes. In Appendix~\ref{app:subsampling}, we further show that this running time can be improved using randomisation when $\beta d / N \varepsilon$ is small, using a simple sub-sampling approach. 

\subsection{Discussion} 

\paragraph{Implications.}
While the focus of our work is theoretical, our results show that current \emph{practical} algorithmic approaches are already close to worst-case optimal. 
Specifically, we show that it is impossible to obtain algorithms with communication cost e.g. $O( Nd + d \log d / \varepsilon )$ by ``pipelining'' communication costs across algorithm iterations, performing additional local optimisation steps, or by introducing entirely new algorithmic techniques.

At the same time, our upper bound conceptually shows that, for quadratic functions, carefully quantised gradient descent can cost asymptotically the same as broadcasting the solution, while the lower bound shows that this cost is inherent. 
Specifically, broadcasting a single $d$-dimensional point from $[0,1]^d$ within accuracy $(\varepsilon/\beta)^{1/2}$, as required for $\varepsilon$-approximation of the sum $\sum_{i = 1}^N f_i(z)$, to $N$ nodes costs $\Omega( Nd \log (\beta d / \varepsilon ))$ bits~\cite{alistarh2020distributed}, and thus the communication cost of our algorithm is tight. The lower bound shows that little can be gained by avoiding this broadcast.

\paragraph{Extensions.} Following \citet{tsitsiklis1987communication} and \citet{magnusson2019maintaining}, we have assumed above that the range of the input functions is $[0,1]^d$, and the global objective is the sum of input functions. However, the results apply even with some modifications to the setting.

First, we can consider a case where the global objective is the average $1/N \sum_{i = 1}^N f_i$ instead of sum. In this case, the lower bound holds with $\beta = \beta_0$, and the upper bound holds as stated, with $\beta$ being the smoothness parameter of the average. Second, the results can be extended for any compact convex domain of input functions, with the precise bounds depending on the \emph{volume} and \emph{diameter} of the domain for the lower and upper bound, respectively. For example, one can easily verify that, for inputs defined over the unit sphere,
the bounds still hold, but without the factor $d$ inside the logarithm.

\paragraph{Limitations.} We note that there still remains a small gap between our upper and lower bounds. To illustrate this, consider optimisation of the \emph{average} of quadratic functions $x \mapsto \norm{ x - x^*}^2_2$ defined over the unit hypercube $[0,1]^d$; in this case, the bounds take the form
\[ \Omega\Bigl( Nd \log \frac{d}{N \varepsilon} \Bigr) \hspace{10mm} \text{and} \hspace{10mm} O\Bigl(Nd\log\frac{d}{\varepsilon}\Bigr)\,.\]
Moreover, the lower bound requires the parameter dependency $\beta d / N^2 \varepsilon = \Omega(1)$ to hold. 
We note that we cannot fully get rid of this requirement: specifically, as we show  in Appendix~\ref{app:subsampling}, we can use a simple input sub-sampling approach to show that,  if we have $\beta d / \varepsilon = O(N^\delta)$ for $\delta < 1$, then problem~(\ref{eqn:main}) can be solved with $O(N^\delta d \log \beta d/\varepsilon)$ total bits communicated using a randomised algorithm, asymptotically less than dictated by the general lower bound.

A second question our current techniques do not address is the precise dependency on the condition number $\kappa$. Our lower bound techniques do no benefit from large $\kappa$, so new ideas would be required to address this question. 

On the upper bound side, the linear dependency on $\kappa$ appears to be inherent for our quantised gradient descent algorithm.
However, very recent progress on \emph{quantised second-order methods}~\cite{islamov2021distributed, alimisis2021communication} shows that it is possible to improve this dependency in 
general, by leveraging second-order information together with quantisation. Specifically, \citet{islamov2021distributed, alimisis2021communication} provide complex quantised variants of Newton-type algorithms, which can achieve linear-in-$d$ communication cost per iteration, under certain assumptions. Thus, these algorithms can asymptotically reach the optimal $N d \log (d / \epsilon)$ complexity threshold implied by our lower bounds, within logarithmic factors in $\kappa$ and other terms, for a wider range of inputs. This comes at the relative cost of a more complex algorithm, and significant additional local computation.   

\section{Related work}\label{section:related}

{\renewcommand{\arraystretch}{1.2}
\begin{table*}[t]
 \caption{Comparison of existing upper and lower bounds on total communication required to solve~(\ref{eqn:main}). The label `BC' denotes results for broadcast model, where each sent message is seen by all nodes, and `MP' denotes results for message-passing model, where only the recipient of the message sees it.
}\label{tab:bounds}
\vspace{0.5em}
\begin{center}
\begin{footnotesize}
\begin{tabular*}{0.98\linewidth}{@{\extracolsep{\fill}}l@{}l@{}l@{}l@{}l@{}l@{}}
\toprule
  & & Output & Model & Guarantee & Reference \\
\midrule
Lower bound, quadratic inputs & $\Omega(d \log \frac{\beta d}{\varepsilon})$   & all nodes & 2-node & Det. & \cite{tsitsiklis1987communication} \\
                              & $\Omega(d \log \frac{\beta d}{\varepsilon})$   & all nodes & BC & Rand. & \cite{NIPS2014_5442} \\
                              & $\Omega(Nd \log \frac{\beta d}{\varepsilon})$  & all nodes & MP & Rand. & \cite{alistarh2020distributed} \\
                              & $\Omega(Nd \log \frac{\beta d}{N\varepsilon})$ & \textbf{single node} & \textbf{MP} & \textbf{Rand.} & \textbf{this work}, \S\ref{sec:quadratic-rand-lb} \\
\midrule
Upper bound, constant $\kappa$ & $O(Nd\log \frac{\beta d}{\varepsilon})$                             & all nodes & BC & Rand. & \cite{alistarh2017qsgd,kunstner} \\
Upper bound, general inputs    & $O( \kappa d \log (\kappa d) \log \frac{\beta d}{\varepsilon})$     & all nodes & 2-node & Det. & \cite{tsitsiklis1987communication} \\
                               & $O( N \kappa d \log (\kappa d)  \log \frac{\beta d}{\varepsilon} )$ & all nodes & BC & Det & \cite{magnusson2019maintaining} \\
                               & $O(Nd \kappa\log\kappa \log \frac{\beta d}{\varepsilon})$           & \textbf{all nodes} & \textbf{MP} & \textbf{Det.} & \textbf{this work}, \S\ref{sec:upper-bound} \\

\bottomrule
\end{tabular*}
\end{footnotesize}
\end{center}
\end{table*}}

\paragraph{Optimisation lower bounds.} 
The first communication lower bounds for a variant of (\ref{eqn:main}) were given in the seminal work of~\citet{tsitsiklis1987communication}, who study optimising sums of convex functions in a two-machine setting. For \emph{deterministic} algorithms, they prove that $\Omega(d \log (\beta_0 d / \varepsilon))$ bits are necessary. Extensions are given by \citet{NIPS2013_4902} and \citet{alistarh2020distributed}. Please see Table~\ref{tab:bounds}.

The basic intuition behind these lower bounds is that a node without information about the input needs to receive $\Omega(d \log (\beta_0 d / \varepsilon))$ bits, as otherwise the node cannot produce sufficiently many different output distributions to cover all possible locations of the minimum (cf.~Lemma~\ref{lemma:balls}.) It is worth emphasising that their bound is on the \emph{received} bits of the \emph{output node}, and does not directly imply anything for other nodes; for example, an algorithm where each node transmits  $O\bigl( (d \log (\beta_0 d / \varepsilon)) / N\bigr)$ bits is not ruled out by these previous results. Generalising their approach to match our results seems challenging, as we would have to (a)~explicitly require that all nodes output the solution, and (b)~ensure that \emph{no node} can use their local input as a source of extra information.

The recent work of \citet{vempala2020communication} focuses on the communication complexity of solving linear systems, linear regression and related problems. The results are based on communication complexity arguments, similarly to our lower bound. The main technical differences are that (a) linear regression instances over bounded integer weight matrices have a natural binary encoding, and (b) importantly, the approximation ratio for linear regression is defined \emph{multiplicatively}, not \emph{additively}; the main consequence is that the hard instance in their case is the exact solution of the linear system.

\paragraph{Statistical estimation lower bounds.} In \emph{statistical estimation}, nodes receive random samples from some input distribution, and must infer properties of the input distribution, e.g. its mean.
Specifically, for mean estimation, there are \emph{statistical} limits on how good an estimate one can obtain from limited number of samples, although inputs are drawn from a distribution instead of adversarially.
Concretely, the results of \citet{NIPS2014_5386} and \citet{suresh2017distributed} apply only to restricted types of protocols. \citet{NIPS2014_5442} and \citet{braverman2016communication} give lower bounds for Gaussian mean estimation, where each node receives $s$ samples from a $d$-dimensional Gaussian distribution with variance $\sigma^2$. The latter reference shows that to achieve the minimax rate $\sigma^2 d/Ns$ on mean squared error requires $\Omega(Nd)$ total communication. These results do not imply optimal lower bounds for our setting.

\paragraph{Lower bounds on round and oracle complexity.} 
Beyond bit complexity, one previous setting  assumes that nodes can transmit vectors of real numbers, while restricting the types of computation allowed for the nodes. 
This is useful to establish bounds for the number of iterations required for convergence of distributed optimisation  algorithms~\citep{NIPS2015_5731,scaman2017optimal}, but does not address the communication cost of a single iteration. A second related but different setting assumes the nodes can access their local functions only via specific oracle queries, such as \emph{gradient or proximal queries}, and bound the number of such queries required to solve an optimisation problem~\citep{NIPS2016_6058,NIPS2018_8069}.

\paragraph{Upper bounds.} There has been a tremendous amount of work recently on communication-efficient optimisation algorithms in the distributed setting. 
Due to space constraints, we focus on a small selection of closely-related work.
One critical difference relative to practical references, e.g.~\cite{alistarh2017qsgd}, is that they usually assume gradients are provided as 32-bit inputs, and focus on reducing the amount of communication \emph{by constant factors}, which is reasonable in practice. One exception is \cite{suresh2017distributed}, who present a series of quantisation methods for mean estimation on real-valued input vectors. Recently,~\cite{alistarh2020distributed} studied the same problem, focusing on replacing the dependence on input norm with a \emph{variance} dependence. We adapt their quantisation scheme for our upper bound. 

\citet{tsitsiklis1987communication} gave a deterministic upper bound in a \emph{two-node} setting, with $O\bigl(  \kappa d \log (\kappa d)  \log (\beta d/\varepsilon) \bigr) $ total communication cost. Recently,~\citet{magnusson2019maintaining} extended this to $N$-node case in the \emph{broadcast} model, with $O\bigl( N \kappa d \log (\kappa d)  \log (\beta d/\varepsilon) \bigr)$ total communication cost.
For randomised algorithms and constant condition number, better upper bound of $O(Nd\log (\beta d/\varepsilon))$ total communication cost in the broadcast model follows by using QSGD  stochastic quantisation~\citep{alistarh2017qsgd} plugged into  stochastic variance-reduced gradient descent (SVRG)~\citep{johnson2013accelerating}. See~\citet{kunstner} for a detailed treatment. While these algorithms are for the broadcast model, they can likely be implemented in the message-passing model without overhead by using two-stage quantisation; however, our algorithm also obtains optimal dependence on $d$ \emph{deterministically}.

\section{Preliminaries and background}\label{sec:preliminaries}

\paragraph{Coordinator model.}
For technical convenience, we work in the classic \emph{coordinator model}~\citep{dolev1992determinism, phillips2012lower, braverman2013tight}, equivalent to the message-passing setting. In this model, we have $N$ \emph{nodes} as well as a separate \emph{coordinator} node. The task is to compute the value of a function $\Gamma \colon B^N \to A$, where $B$ and $A$ are arbitrary input and output domains; each node $i = 1, 2, \dotsc, N$ receives an input $b_i \in B$. There is a communication channel between each of the nodes and the coordinator, and nodes can communicate with the coordinator by exchanging binary messages. The coordinator has to output the value $\Gamma(b_1, b_2, \dotsc, b_N)$. Furthermore, all nodes, including the coordinator, have access to a stream of private random bits.

More precisely, we assume without loss of generality that computation is performed as follows:
\begin{enumerate}
    \item Initially, each node $i = 1,2,\dotsc, N$ receives the input $b_i$. The coordinator and nodes $i = 1,2,\dotsc, N$ receive independent and uniformly random binary strings $r, r_i \in \{ 0, 1\}^c$, respectively, where $c$ is a constant.
    \item The computation then proceeds in sequential rounds, where in each round, (a) the coordinator first takes action by either outputting an answer, or sending a message to a single node $i$, and (b) the node $i$ that received a message from the coordinator responds by sending a a message to the coordinator.
\end{enumerate}
A \emph{transcript $\tau$} for a node is a list of the messages it has sent and received.  A \emph{protocol} $\Pi$ is a mapping giving the actions of the coordinator and the nodes; for the coordinator, the next action is a function of its transcript so far and the private random bits $r$, and for node $i$, the next action is a function of its input $b_i$, its transcript so far and the private random bits $r_i$. The protocol $\Pi$ also determines the number of random bits the nodes receive. 

We say that a protocol $\Pi$ computes $\Gamma \colon B^N \to A$ with error $p$ if for all $(b_1, b_2, \dotsc, b_N) \in B^N$, the output of $\Pi$ is $\Gamma(b_1, b_2, \dotsc, b_N)$ with probability at least $1-p$.
The \emph{communication complexity} of a protocol $\Pi$ is the maximum number of total bits transmitted by all nodes, i.e.~the total length of the transcripts, on any input $(b_1, b_2, \dotsc, b_N) \in B^N$ and any private random bits of the nodes.

While the model definition may appear restrictive, the protocol restrictions do not matter when the complexity measure is the total number of bits exchanged. Parallel computation can be sequentialised, and direct messages between non-coordinator nodes can be relayed via the coordinator, with at most constant factor overhead. Furthermore, note that the model is \emph{nonuniform}, i.e. each protocol is defined only for specific functions $\Gamma \colon B^N \to A$ and specific input and output sets $B$ and $A$.
Any uniform algorithm working for a range of parameters induces a series of nonuniform protocols, regardless of the computational cost, so lower bounds for coordinator model translate to uniform algorithms.

\paragraph{Communication complexity.}
We now recall some basic definitions and results from communication complexity. In the following, we assume that sets $B$ and $A$ are finite, as this is the standard setting of communication complexity.

For a function $\Gamma \colon B^N \to A$, the \emph{deterministic communication complexity} $\cc(\Gamma)$ is the minimum communication complexity of a deterministic protocol computing $\Gamma$. Likewise, the \emph{$\delta$-error randomised communication complexity $\rcc^\delta(\Gamma)$} is the minimum communication complexity of a protocol that computes $\Gamma$ with error probability $\delta$.

For a distribution $\mu$ over $B^N$, we define the \emph{$\delta$-error $\mu$-distributional communication complexity of $\Gamma$}, denoted by $\dcc^\mu_\delta(\Gamma)$, as the minimum communication complexity of a deterministic protocol that computes $\Gamma$ with error probability $\delta$ when the input is drawn from $\mu$. Similarly, the \emph{$\delta$-error $\mu$-distributional expected communication complexity of $\Gamma$}, denoted by $\ed_\mu^\delta(\Gamma)$, is the minimum expected communication cost of a protocol that computes $\Gamma$ with error probability $\delta$, where the expectation is taken over input drawn from $\mu$ and the random bits of the protocol.

Yao's Lemma~\citep{10.1109/SFCS.1977.24} relates the distributional communication complexity to the randomised communication complexity; see \citet{10.1007/s00446-014-0218-3} for a proof in the coordinator model.

\begin{lemma}[Yao's Lemma]
For function $\Gamma$ and $\delta > 0$, we have $\rcc^\delta(\Gamma) \ge \max_\mu \dcc^\delta_\mu(\Gamma)$.
\end{lemma}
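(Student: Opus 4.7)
The plan is the standard averaging argument, adapted to the coordinator model described in \sectionref{sec:preliminaries}. The starting point is to view any randomised protocol $\Pi$ achieving error $\delta$ with cost equal to $\rcc^\delta(\Gamma)$ as a probability distribution over deterministic protocols: once one fixes a particular realisation $(r^*, r_1^*, \dotsc, r_N^*)$ of the private random strings of the coordinator and the $N$ nodes, every action prescribed by $\Pi$ becomes a deterministic function of the relevant transcript and input, so the restriction of $\Pi$ to this realisation is a deterministic coordinator-model protocol $\Pi^*_{r^*, r_1^*, \dotsc, r_N^*}$ whose communication cost is at most that of $\Pi$.

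Next I would fix any input distribution $\mu$ on $B^N$. Since $\Pi$ errs with probability at most $\delta$ on every input, taking expectation over $(b_1, \dotsc, b_N) \sim \mu$ gives
\[
  \mathbb{E}_{(b_1, \dotsc, b_N) \sim \mu}\!\Bigl[\Pr_{r, r_1, \dotsc, r_N}\bigl[\Pi \text{ errs on } (b_1, \dotsc, b_N)\bigr]\Bigr] \le \delta.
\]
Because the random strings are independent of the input, Fubini lets me swap the two expectations, yielding that the expected error of the induced deterministic protocol $\Pi^*_{r, r_1, \dotsc, r_N}$ under $\mu$, averaged over the random bits, is at most $\delta$. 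Hence some concrete realisation $(r^*, r_1^*, \dotsc, r_N^*)$ witnesses a deterministic protocol of cost at most $\rcc^\delta(\Gamma)$ that errs with probability at most $\delta$ under $\mu$. This shows $\dcc^\delta_\mu(\Gamma) \le \rcc^\delta(\Gamma)$, and taking the maximum over $\mu$ completes the argument.

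The main obstacle is essentially bookkeeping rather than a genuine mathematical barrier: one has to argue that the correspondence between randomised coordinator-model protocols and convex combinations of deterministic coordinator-model protocols is exact, so that the averaged object is actually a valid protocol in the model (and not merely a function from inputs to outputs) whose worst-case communication is still bounded. With the structural description of \sectionref{sec:preliminaries}, where each action is already presented as a function of a transcript, a private random string, and possibly an input, this is immediate once one notes that the random strings $r, r_1, \dotsc, r_N$ are jointly independent and independent of the input. In more restrictive variants of the model — for instance, with public randomness only, or with round bounds tied to the random bits — additional care would be needed, but neither complication arises here.
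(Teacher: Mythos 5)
Your argument is correct: it is the standard averaging argument (the easy direction of Yao's minimax principle) — fix the private random strings to obtain a deterministic coordinator-model protocol of no larger worst-case cost, swap the expectations over the input distribution $\mu$ and the randomness, and pick a realisation whose error under $\mu$ is at most $\delta$, giving $\dcc^\delta_\mu(\Gamma) \le \rcc^\delta(\Gamma)$ for every $\mu$. The paper itself states the lemma without proof, deferring to the cited references, and the proof given there is exactly this argument, so your approach matches.
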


\paragraph{Properties of convex functions.} Recall that a continuously differentiable function $f$ is
\begin{align*}
&\text{\emph{$\beta$-(strongly) smooth} if } &&  \norm{\nabla f(x) - \nabla f(y)}_2 \le \beta \norm{x-y}_2\,, \\
&\text{\emph{$\alpha$-strongly convex} if } && \bigl( \nabla f(x) - \nabla f(y) \bigr)^T (x - y) \ge \alpha \norm{x - y}^2_2
\end{align*}
for all $x$ and $y$ in the domain of $f$.
For $\alpha$-strongly convex and $\beta$-strongly smooth function $f$, we say that $f$ has \emph{condition number} $\kappa = \beta/\alpha$. If $f_1$ is $\alpha_1$-strongly convex and $\beta_1$-strongly smooth and $f_2$ is $\alpha_2$-strongly convex and $\beta_2$-strongly smooth, then $f_1 + f_2$ is $(\alpha_1 + \alpha_2)$-strongly convex and $(\beta_1 + \beta_2)$-strongly smooth.

A quadratic function $f(x) =  \beta \norm{x-y}^2_2 + C$ is $\beta$-strongly convex and $\beta$-strongly smooth. For $\varepsilon > 0$, if $f(x) \le \varepsilon$, then $\norm{x - x^*}_2 \le (\varepsilon / \beta)^{1/2}$. A sum of quadratics $F(x) = \sum_{j = 1}^k a_j \norm{x - y_j}_2^2$, where $y_j \in \Reals^d$ and $a_j \ge 0$ for $j = 1, 2, \dotsc, k$, is a quadratic function $F(x) = A \norm{x - x^*}^2_2 + C$, where $C$ is a constant and $x^* = \sum_{j = 1}^k a_j y_j/A$ is the minimum of $F$.

\paragraph{Point packing.} We will make use of the following elementary result, which bounds the number of points we can pack into $[0,1]^d$ while maintaining a minimum distance between all points. 

\begin{lemma}[\cite{tsitsiklis1987communication}]\label{lemma:balls}
For $\delta > 0$ and $d \ge 1$, there is a set of points $S \subseteq [0,1]^d$ such that
\begin{enumerate}
    \item $\norm{x - y}_2 > \delta$ for all distinct $x, y \in S$, and
    \item $\size{S} \ge (d^{1/2}/C\delta)^{d}$, where $C = (\pi e/2)^{1/2}$ is a constant.
\end{enumerate}
\end{lemma}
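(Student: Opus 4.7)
The plan is to produce $S$ by a standard greedy maximal-packing construction and then lower-bound its size by a volume comparison. First I would take $S \subseteq [0,1]^d$ to be a maximal set of points with pairwise Euclidean distances strictly greater than $\delta$; such a set exists by a greedy procedure (add admissible points one at a time until none remain) or, formally, by Zorn's lemma applied to the poset of $\delta$-separated subsets of $[0,1]^d$ ordered by inclusion. By the maximality of $S$, for every $p \in [0,1]^d$ there must exist some $s \in S$ with $\norm{p-s}_2 \le \delta$; hence the closed Euclidean balls of radius $\delta$ centred at the points of $S$ cover the cube $[0,1]^d$.

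Comparing volumes then gives
\[ \size{S} \cdot V_d(\delta) \ge \mathrm{vol}([0,1]^d) = 1, \]
where $V_d(\delta) = \pi^{d/2}\delta^d/\Gamma(d/2+1)$ is the volume of a Euclidean $d$-ball of radius $\delta$. Inserting Stirling's lower bound $\Gamma(d/2+1) \ge \sqrt{\pi d}\,(d/(2e))^{d/2}$ and rearranging yields
\[ \size{S} \ge \frac{1}{V_d(\delta)} \ge \sqrt{\pi d}\left(\frac{d^{1/2}}{\delta\sqrt{2\pi e}}\right)^{d}, \]
which, after absorbing the $\sqrt{\pi d}$ prefactor, delivers the advertised $\size{S} \ge (d^{1/2}/(C\delta))^{d}$ for an explicit absolute constant $C$.

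The only step where I expect real care is pinning down the specific constant $C = (\pi e/2)^{1/2}$ stated in the lemma: the direct covering estimate above most naturally yields a constant of order $(2\pi e)^{1/2}$, so reducing it by a further factor of two calls for a slightly sharper treatment --- for instance, exploiting that strict $\delta$-separation makes the open balls of radius $\delta/2$ around $S$ pairwise disjoint and handling boundary effects more carefully, or, more mundanely, restricting to the non-trivial regime $d^{1/2}/(C\delta) \ge 1$ (outside which the claim is vacuous) and absorbing the lower-order $\sqrt{\pi d}$ factor into $C$. This is bookkeeping rather than a conceptual obstacle; the heart of the argument is the classical Euclidean volume-comparison estimate above.
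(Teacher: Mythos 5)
Your construction (maximal $\delta$-separated set, covering by radius-$\delta$ balls, volume comparison plus Stirling) is exactly the paper's argument, and it is correct as far as it goes. The one point worth clarifying is the constant, which you rightly flagged but too charitably dismissed as bookkeeping: the greedy/covering argument genuinely yields $C = (2\pi e)^{1/2}$, and your proposed repairs would not recover $C = (\pi e/2)^{1/2}$ --- disjoint radius-$\delta/2$ balls give an \emph{upper} bound on $\size{S}$, not a lower bound, and the polynomial prefactor $\sqrt{\pi d}$ cannot change the base of an exponential in $d$. In fact the paper's own proof simply asserts that a radius-$\delta$ ball has volume at most $\bigl((\pi e/2)^{1/2}\delta/d^{1/2}\bigr)^{d}$, which is really the (approximate) volume of a radius-$\delta/2$ ball, so the stated constant is a slip rather than something you are expected to reproduce; a packing meeting the stated constant would have density polynomial in $1/d$, which is impossible for large $d$ given exponential upper bounds on sphere-packing density. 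None of this matters downstream, since every use of the lemma only needs $C$ to be an absolute constant, so your proof (with $C = (2\pi e)^{1/2}$) is a perfectly adequate substitute.
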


\begin{proof}
Consider a greedy process that adds an arbitrary point to $S$ while maintaining property~(1). Clearly, any point that is not contained in the union of radius-$\delta$ balls around points already added is a valid point to add. By applying Stirling's approximation to the volume of $d$-sphere, we have that volume of $d$-dimensional Euclidean ball of radius $\delta$ is at most
\[V = \Bigl(\frac{(\pi e/2)^{1/2} \delta}{d^{1/2}} \Bigr)^{d}\,.\] Thus, we can repeat the process for at least
\[1/V = \Bigl(\frac{d^{1/2}}{(\pi e/2)^{1/2} \delta} \Bigr)^{d}\]
steps.
\end{proof}

\section{Main lower bound}\label{sec:quadratic-rand-lb}

We now prove our main result, by giving a lower bound for communication complexity of any algorithm solving (\ref{eqn:main}) that holds for both deterministic and randomised protocols.

\begin{theorem}\label{thm:lower-bound-rand}
Given parameters $N$, $d$, $\varepsilon$, $\beta_0$ and $\beta = \beta_0 N$ satisfying $d\beta/N^2\varepsilon = \Omega(1)$, any protocol solving (\ref{eqn:main}) for quadratic input functions $x \mapsto \beta_0 \norm{x - x_0}^2_2$ has communication complexity $\Omega\bigl(N d \log (\beta d/N\varepsilon) \bigr)$.
\end{theorem}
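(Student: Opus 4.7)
The plan is to pass to the distributional setting via Yao's Lemma: it suffices to construct a hard input distribution $\mu$ on which every deterministic protocol with bounded error has expected communication $\Omega(Nd\log(\beta d/(N\varepsilon)))$. The candidate for $\mu$ is the product distribution in which each node's quadratic centre $x^{(i)}$ is independent and uniform on a $\delta$-packing $S\subseteq[0,1]^d$, with the scale chosen as $\delta=c\sqrt{N\varepsilon/\beta}=c\sqrt{N}\eta$ for a sufficiently large absolute constant $c$, where $\eta=(\varepsilon/\beta)^{1/2}$. The promise $d\beta/(N^2\varepsilon)=\Omega(1)$ is exactly what guarantees that $\delta$ is small enough for \lemmaref{lemma:balls} to supply a set of size $|S|=\Omega((\beta d/(N\varepsilon))^{d/2})$, so $\log|S|=\Omega(d\log(\beta d/(N\varepsilon)))$. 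Any $\varepsilon$-accurate output $z$ must satisfy $\norm{z-\bar x}_2\le\eta$, where $\bar x=\tfrac{1}{N}\sum_i x^{(i)}$ is the minimiser of $\sum_i f_i$.

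The core of the argument is a variance/information bound. For any deterministic protocol with transcript $T$, the rectangle property of coordinator-model protocols makes the conditional distribution of the inputs given $T$ a product distribution on $S^N$; hence
\[\mathrm{Var}[\bar x\mid T]\;=\;\tfrac{1}{N^2}\sum_i \mathrm{Var}[x^{(i)}\mid T].\]
Correctness of the protocol forces $\bar x\mid T$ to concentrate in an $O(\eta)$-ball about $z(T)$ with constant probability; by standard anti-concentration for sums of independent bounded random vectors this gives $\E_T[\mathrm{Var}[\bar x\mid T]]=O(\eta^2)$, so $\E_T[\mathrm{Var}[x^{(i)}\mid T]]=O(N\eta^2)$ on average over $i$. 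For $c$ a large enough constant, every $\ell_2$-ball of radius $O(\sqrt{N}\eta)$ intersects the $c\sqrt{N}\eta$-separated set $S$ in at most a constant number of points, so by a Chebyshev-plus-Fano argument $H(x^{(i)}\mid T)\le \tfrac{1}{10}\log|S|+O(1)$ on average over $(T,i)$. Consequently $I(T;x^{(i)})\ge \tfrac{9}{10}\log|S|-O(1)=\Omega(d\log(\beta d/(N\varepsilon)))$ on average, and the chain rule together with independence of the $x^{(i)}$ yields
\[|T|\;\ge\;I(T;x^{(1)},\ldots,x^{(N)})\;\ge\;\sum_i I(T;x^{(i)})\;=\;\Omega\bigl(Nd\log(\beta d/(N\varepsilon))\bigr).\]

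The main technical obstacle is converting the success-probability guarantee into the variance bound $\E_T[\mathrm{Var}[\bar x\mid T]]=O(\eta^2)$: the raw guarantee $\Pr[\norm{z-\bar x}_2\le\eta]\ge\tfrac{2}{3}$ only controls mean-squared error up to $O(\eta^2+d)$, because the failure event can have diameter $\sqrt{d}$. I plan to address this either (a) by boosting the success probability to $1-\eta^2/d$ via $O(\log(d/\eta^2))$ independent repetitions with a geometric median, noting that this multiplicative overhead is $O(\log(\beta d/(N\varepsilon)))$ under the hypothesis and can be absorbed into constants in the final bound, or (b) by a direct anti-concentration argument showing that any transcript which is locally correct with constant probability has its posterior sum $\sum_i x^{(i)}$ concentrated with standard deviation $O(N\eta)$. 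Yao's Lemma then converts the distributional lower bound into a randomised-communication bound of the same order, and the ``only one node outputs'' relaxation does not weaken the argument because all communication in the coordinator model is counted regardless of which node eventually produces $z$.
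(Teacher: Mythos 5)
Your route (Yao + rectangle property + a per-node Fano/information count) is genuinely different from the paper's, but it has a real gap exactly at the step you flag: converting the constant-probability correctness guarantee into $\E_T[\mathrm{Var}[\bar x\mid T]]=O(\eta^2)$. Neither of your proposed fixes closes it. Fix (a) is quantitatively self-defeating: boosting multiplies the communication of the protocol being analysed by $\Theta(\log(\beta d/\varepsilon))$, so the lower bound you prove for the boosted protocol must be divided by that factor, which erases precisely the $\log(\beta d/N\varepsilon)$ term (under the hypothesis $\beta d/N^2\varepsilon=\Omega(1)$ one has $\log(\beta d/N\varepsilon)=\Theta(\log(\beta d/\varepsilon))$, so you are left with only $\Omega(Nd)$); it cannot be ``absorbed into constants''. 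Fix (b) is false as stated: with rectangle posteriors the summands are independent and uniform on sets $A_i\subseteq S$, and cancellation is possible. For example, take $A_1=\{s,s+v\}$, $A_2=\{u,u-v\}$ with $\norm{v}_2$ huge and all four points in $S$, the remaining $A_i$ singletons; then $\sum_i x^{(i)}$ has an atom of mass $1/2$ at $s+u+\text{const}$, so the transcript can be ``locally correct'' with constant probability while $\mathrm{Var}[\bar x\mid T]$ is arbitrarily large. The anti-concentration fact that \emph{is} true for independent sums (the L\'evy concentration-function bound $Q_{\sum_i x^{(i)}}(t)\le\min_i Q_{x^{(i)}}(t)$) only localises each $x^{(i)}$ to a ball of radius $N\eta$, not $O(\sqrt N\,\eta)$; with your packing scale $\delta=c\sqrt N\,\eta$ such a ball may contain $N^{\Theta(d)}$ points of $S$, so it does not bound $H(x^{(i)}\mid T)$ away from $\log|S|$ (indeed, when $\beta d/N\varepsilon=\Theta(N)$ your target inequality $I(T;x^{(i)})=\Omega(\log|S|)$ is not forced by anything you use), and enlarging the separation to $\Theta(N\eta)$ shrinks $\log|S|$ to $\Theta(d\log(\beta d/N^2\varepsilon))$, which the stated hypothesis does not make comparable to $d\log(\beta d/N\varepsilon)$.

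The paper sidesteps this obstacle entirely by symmetrisation (Phillips et al.): from a protocol for the discretised mean problem it builds a \emph{two-party} public-coin protocol in which Bob simulates the coordinator and $N-1$ uniformly random nodes, so Bob knows those inputs exactly for free (through shared randomness, not through the transcript), the packing separation $3N(\varepsilon/\beta)^{1/2}$ lets Bob decode Alice's point exactly from the single output, and only the randomly chosen node's communication is charged, giving expected two-party cost $C/N$; the $\Omega(D)$ bound for $\textsf{2-BITS}_D$ then finishes. The crucial difference is that the paper never needs the transcript alone to carry $\Omega(\log|S|)$ bits about each individual input -- which is exactly the claim your argument hinges on and does not establish -- so to salvage your approach you would need a genuinely new idea for ruling out cancellation-type transcripts, or a per-transcript argument at separation $\Theta(N\eta)$ together with a stronger parameter assumption.
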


To formally apply communication complexity tools, we will prove a lower bound for a \emph{discretised}
version of (\ref{eqn:main}) -- where both the input and output sets are finite -- which will imply Theorem~\ref{thm:lower-bound-rand}.  Let $N$, $d$, $\varepsilon$, and $\beta$ be fixed, assume $d \beta / N^2 \varepsilon = \Omega(1)$. Furthermore, let $S$ be the set given by Lemma~\ref{lemma:balls} with $\delta = 3N(\varepsilon/\beta)^{1/2}$, and let $T \subseteq [0,1]^d$ be an arbitrary finite set of points such that for any $x \in [0,1]^d$, there is a point $t \in T$ with $\norm{ x- t } \le (\varepsilon/4\beta)^{1/2}$.
By assumption $d \beta / N^2 \varepsilon = \Omega(1)$, the set $S$ has size at least $2$. Let $D = \lceil \log \size{S} \rceil = \Theta(d \log (\beta d / N\varepsilon))$. Again, for  convenience, assume $2^D = \size{S}$, and identify each binary string $b \in \{ 0, 1 \}^D$ with an element $\tau(b) \in S$.

\begin{definition}
Given parameters $N, d, \varepsilon, \beta$, we define the problem $\avg$ as follows:
\begin{itemize}
    \item The node inputs are from $\{ 0, 1 \}^D$, and
    \item Valid outputs for input $(b_1, b_2, \dotsc, b_N)$ are points $t \in T$ that satisfy the condition $\norm{ x^* - t }_2 \le (\varepsilon/\beta)^{1/2}$, where $x^* = \sum_{i=1}^N {\tau(b_i)}/{N}$ is the average over inputs. 
\end{itemize}
\end{definition}

First, we observe that any algorithm for solving (\ref{eqn:main}) can be used to solve $\avg$. 

\begin{lemma}\label{lemma:avg-red}
For fixed $N$, $d$, $\varepsilon$, $\beta_0$ and $\beta = \beta_0 N$, any randomised protocol solving (\ref{eqn:main}) for quadratic functions $x \mapsto \beta_0 \norm{x - x_0}^2_2$ with error probability $1/3$ has communication complexity at least $\rcc^{1/3}\bigl(\avgp{d}{N}{\varepsilon}{\beta/4}\bigr)$.
\end{lemma}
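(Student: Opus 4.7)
The plan is to reduce $\avgp{d}{N}{\varepsilon}{\beta/4}$ to the optimisation problem (\ref{eqn:main}) for quadratic inputs, so that any protocol solving the latter yields a protocol of the same communication cost solving the former. Given an $\avg$ input $(b_1, \dotsc, b_N)$, node~$i$ locally converts its string $b_i$ into the point $\sigma(b_i) \in S \subseteq [0,1]^d$ and then simulates the assumed protocol $\Pi$ for (\ref{eqn:main}) using the quadratic input function $f_i(x) = \beta_0 \norm{x - \sigma(b_i)}^2_2$. This requires no extra communication, since each node constructs its own $f_i$ from its own input.

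I would then identify the minimiser of $\sum_i f_i$. Using the sum-of-quadratics identity from the preliminaries,
\[ F(x) := \sum_{i=1}^N f_i(x) = \beta \norm{x - x^*}_2^2 + C, \qquad x^* = \frac{1}{N}\sum_{i=1}^N \sigma(b_i), \]
with $\beta = \beta_0 N$ and $C$ a constant depending on the inputs. An $\varepsilon$-approximate minimiser $z$ produced by $\Pi$ therefore satisfies $\beta \norm{z - x^*}_2^2 \le \varepsilon$, i.e.\ $\norm{z - x^*}_2 \le (\varepsilon/\beta)^{1/2}$.

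Next, post-process $z$ by rounding it deterministically to the nearest point $t \in T$; by the defining property of $T$ this satisfies $\norm{z - t}_2 \le (\varepsilon/4\beta)^{1/2}$, and the triangle inequality gives
\[ \norm{t - x^*}_2 \le \norm{z - x^*}_2 + \norm{z - t}_2 \le (\varepsilon/\beta)^{1/2} + \tfrac{1}{2}(\varepsilon/\beta)^{1/2} = \tfrac{3}{2}(\varepsilon/\beta)^{1/2} \le 2(\varepsilon/\beta)^{1/2}. \]
Since the threshold in $\avgp{d}{N}{\varepsilon}{\beta/4}$ is exactly $(\varepsilon/(\beta/4))^{1/2} = 2(\varepsilon/\beta)^{1/2}$, the point $t$ is a valid output. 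This is the key quantitative step, and it is precisely what forces the shrinkage of the smoothness parameter by a factor of $4$ in the statement: the slack absorbs the discretisation error introduced by $T$.

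Finally, I would assemble the pieces. The post-processing is deterministic and local to whichever node holds $z$ at the end of $\Pi$ (the coordinator, after at most a relay of $t$ of cost absorbed into the protocol), so the resulting protocol $\Pi'$ for $\avgp{d}{N}{\varepsilon}{\beta/4}$ has the same communication complexity as $\Pi$ and inherits its error probability of $1/3$. By the definition of $\rcc^{1/3}$, this forces the communication complexity of $\Pi$ to be at least $\rcc^{1/3}(\avgp{d}{N}{\varepsilon}{\beta/4})$. The only non-routine part is choosing the $\beta/4$ parameter in $\avg$ so that the triangle-inequality slack for rounding into the finite net $T$ exactly fits within the admissible output radius; everything else is bookkeeping.
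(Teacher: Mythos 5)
Your proposal is correct and follows essentially the same route as the paper's proof: simulate the assumed protocol on the quadratics $f_i(x) = \beta_0 \norm{x - \sigma(b_i)}_2^2$, use the sum-of-quadratics identity to get $\norm{z - x^*}_2 \le (\varepsilon/\beta)^{1/2}$, and round to the nearest point of $T$, absorbing the discretisation error into the relaxed parameter $\beta/4$. The quantitative step $\norm{t - x^*}_2 \le \tfrac{3}{2}(\varepsilon/\beta)^{1/2} \le (4\varepsilon/\beta)^{1/2}$ matches the paper's calculation exactly.
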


\begin{proof}
Let $\Pi$ be protocol solving (\ref{eqn:main}) with communication complexity $C$ and error probability $1/3$. We show that we can use it to solve $\avgp{d}{N}{\varepsilon}{\beta/4}$ with total communication cost $C$ and error probability $1/3$, implying the claim. Given input $(b_1, b_2, \dotsc, b_N)$ for $\avgp{d}{N}{\varepsilon}{\beta/4}$, nodes can simulate the protocol $\Pi$ with input functions $f_i(x) = \beta_0 \norm{x - \tau(b_i)}^2_2$. By the properties of quadratic functions, we have $F(x) = \sum_{i = 1}^N f_i(x) = \beta \norm{x - x^*}^2_2 + C$, where $x^* = \sum_{i=1}^N \frac{\tau(b_i)}{N}$. Thus, the output $y$ of $\Pi$ satisfies $\norm{y - x^*}_2 \le (\varepsilon/\beta)^{1/2}$. The coordinator now outputs the closest point $t \in T$ to $y$. We therefore have
\[ \norm{x^*-t}_2 = \norm{x^* - y + y - t}_2 \le \norm{x^* - y}_2 + \norm{y - t}_2 \le 2(\varepsilon/\beta)^{1/2} = (4\varepsilon/\beta)^{1/2}\,.\qedhere\]
\end{proof}

The next step is to prove a lower bound on the communication complexity of $\avg$. We do this by using a \emph{symmetrisation technique} of \citet{phillips2012lower}, via reduction to the expected communication complexity of a \emph{two-party} communication problem where one player has to learn the complete input of the other player.
Specifically, in the two-player problem called $\text{\textup{\textsf{2-BITS}}}_d$, player 1 (Alice) receives a binary string $b \in \{0,1\}^d$, of length $d$, and the task is for player 2 (Bob) to output $b$. Let $\zeta_p$ be a distribution over binary strings $b \in \{ 0, 1 \}^d$ where each bit is set to $1$ with probability $p$ and to $0$ with probability $1-p$. The following lower bound for $\text{\textup{\textsf{2-BITS}}}_d$ holds even for protocols with \emph{public randomness}, i.e. when Alice and Bob have access to the same  string of random bits:

\begin{lemma}[\cite{phillips2012lower}]\label{lemma:bits}
$\ed^{1/3}_{\zeta_p}(\text{\textup{\textsf{2-BITS}}}_d) = \Omega(d p \log p^{-1})$.
\end{lemma}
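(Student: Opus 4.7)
The plan is to prove the lower bound with a source-coding argument applied to a typical set of inputs. The steps are: (i) fix the public randomness of an optimal protocol to reduce to a deterministic one, (ii) restrict to inputs with near-expected Hamming weight to obtain a uniform lower bound on $-\log \zeta_p(b)$, and (iii) invoke the prefix-code entropy bound on the correctly-decoded subset.

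First, I would start from a randomised protocol $\Pi$ with expected length $L = \ed^{1/3}_{\zeta_p}(\text{\textup{\textsf{2-BITS}}}_d)$ and error $\le 1/3$ under $\zeta_p$. Writing $X_r$ for the error probability over $b \sim \zeta_p$ and $Y_r$ for the expected transcript length when the random string is $r$, Markov gives $\Pr_r[X_r > 2/3] \le 1/2$ and $\Pr_r[Y_r > 3L] \le 1/3$, so some $r^*$ satisfies both $X_{r^*} \le 2/3$ and $Y_{r^*} \le 3L$. Fixing $r = r^*$ yields a deterministic protocol, whose transcript on input $b$ I denote $\tau(b)$, with error at most $2/3$ on $\zeta_p$ and $\E_{b \sim \zeta_p}[|\tau(b)|] \le 3L$.

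Next, assuming $pd = \omega(1)$ (else $\Omega(dp \log p^{-1}) = O(1)$ and the bound is trivial), I would define the typical set $T = \{ b \in \{0,1\}^d : pd/2 \le |b| \le 2pd \}$. A Chernoff bound gives $\zeta_p(T) \ge 1 - o(1)$. For every $b \in T$,
\[ -\log \zeta_p(b) \;=\; |b|\log(1/p) + (d-|b|)\log(1/(1-p)) \;\ge\; \tfrac{pd}{2}\log(1/p) \;=\; \Omega(dp \log p^{-1}). \]
Let $G = \{ b \in T : \tau(b) \text{ decodes to } b \}$; combining the error and typical-set bounds, $\zeta_p(G) \ge 1 - 2/3 - o(1) \ge 1/4$.

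Finally, in the standard two-party model (one bit per round, binary protocol tree) the transcripts at the leaves form a prefix code on $\{0,1\}^*$. Since Bob's output is a function of the transcript and $\tau$ decodes correctly on $G$, the map $b \mapsto \tau(b)$ is injective on $G$, so $\{ \tau(b) : b \in G \}$ is a prefix code. Shannon's source-coding inequality then gives, for $q(b) = \zeta_p(b)/\zeta_p(G)$,
\[ \sum_{b \in G} q(b) |\tau(b)| \;\ge\; H(q) \;\ge\; \min_{b \in G}\bigl( -\log \zeta_p(b) \bigr) + \log \zeta_p(G) \;=\; \Omega(dp \log p^{-1}). \]
Multiplying through by $\zeta_p(G) \ge 1/4$ and using $3L \ge \sum_{b \in G} \zeta_p(b) |\tau(b)|$ yields $L = \Omega(dp \log p^{-1})$, as claimed.

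The delicate point I anticipate is step (i): a naive Markov on the expected length alone does not preserve the error guarantee, so one must choose thresholds carefully (above, failure probabilities $1/2$ and $1/3$ summing to $5/6 < 1$) and then rely on the fact that the source-coding step is robust to an error rate bounded away from $1$ rather than from $1/2$. That robustness in turn comes from restricting to the typical set $T$ before applying Shannon's inequality, which is why steps (ii) and (iii) must be executed jointly on the same subset $G$.
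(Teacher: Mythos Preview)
The paper does not prove this lemma; it is quoted from \cite{phillips2012lower} and used as a black box in Lemma~\ref{lemma:avg-lb}. So there is nothing to compare against, but your self-contained source-coding argument is sound on its own: after fixing the public randomness via the two Markov bounds, Bob has no input, so the transcript is a deterministic function of $b$; the transcripts at protocol leaves form a prefix code; the decoding map $b \mapsto \tau(b)$ is injective on the correctly-decoded set $G$; and Kraft plus Gibbs gives the entropy lower bound on the conditional distribution $q$, which in turn is bounded below via the pointwise estimate on $-\log\zeta_p(b)$ over the typical set.

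One minor quibble: your claim that $pd \ne \omega(1)$ makes the target $dp\log p^{-1}$ bounded is not correct. If, say, $pd = \Theta(1)$, then $p = \Theta(1/d)$ and $dp\log p^{-1} = \Theta(\log d)$, which is unbounded; your Chernoff step $\zeta_p(T) \ge 1 - o(1)$ genuinely requires $pd \to \infty$, so this regime is not covered as written. The fix is routine (restrict to $\{b : |b| \ge 1\}$, which already has constant mass, and argue directly that identifying at least one $1$-coordinate costs $\Omega(\log d)$ bits), and in any case the paper only invokes the lemma at $p = 1/2$, where $pd = d/2$ and your argument applies verbatim.
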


We now show that solving $\avg$ requires roughly $N$ times the expected communication complexity of solving $\text{\textup{\textsf{2-BITS}}}_d$ for $d = D$ and $p = 1/2$. 

\begin{lemma}\label{lemma:avg-lb}
For $N$, $d$, $\varepsilon$, and $\beta$ satisfying $d\beta/N^2\varepsilon = \Omega(1)$, we have 
\[
\rcc^{1/3}(\avg) = \Omega\bigl(N \cdot \ed^{1/3}_{\zeta_{1/2}}(\text{\textup{\textsf{2-BITS}}}_D) \bigr) = \Omega(N d \log (\beta d/N \varepsilon))\,.
\]
\end{lemma}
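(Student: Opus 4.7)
The plan is to combine Yao's lemma with a symmetrisation argument in the spirit of Phillips-Verbin-Yao, reducing $\avg$ under a product distribution to the two-party problem $\text{\textup{\textsf{2-BITS}}}_D$. Concretely, I would fix the symmetric product distribution $\mu = \zeta_{1/2}^{\otimes N}$, so that each $b_i$ is independently uniform on $\{0,1\}^D$. By Yao's lemma it suffices to lower-bound $\dcc^{1/3}_\mu(\avg)$. I would then show that any deterministic protocol $\Pi$ solving $\avg$ with error $\le 1/3$ under $\mu$ and worst-case communication $C$ induces a public-coin two-party protocol for $\text{\textup{\textsf{2-BITS}}}_D$ on inputs drawn from $\zeta_{1/2}$ whose expected communication is at most $C/N$. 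Plugging this into Lemma~\ref{lemma:bits} with $p = 1/2$ then yields $C = \Omega(ND) = \Omega(Nd \log(\beta d / N\varepsilon))$.

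For the symmetrisation step, I would let $C_i$ be the number of bits exchanged between node $i$ and the coordinator in $\Pi$. Since $\sum_i C_i \le C$ holds pointwise, an averaging argument gives some node $i^*$ with $\mathbb{E}_{b \sim \mu}[C_{i^*}] \le C/N$. The two-party simulation then has Alice, holding $b \in \{0,1\}^D$ drawn from $\zeta_{1/2}$, play the role of node $i^*$ with input $b$, while Bob uses public random bits to sample $b_j \sim \zeta_{1/2}$ independently for $j \neq i^*$ and simulates the coordinator together with the remaining $N-1$ nodes. Each Alice--Bob message corresponds to a bit on the channel between node $i^*$ and the coordinator in $\Pi$, so the expected communication of the simulation is exactly $\mathbb{E}_\mu[C_{i^*}] \le C/N$; the joint distribution of the simulated inputs is exactly $\mu$, so the simulation inherits the $1/3$ error guarantee of $\Pi$.

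The only place geometry enters is the decoding at the end of the simulation. On non-error inputs, Bob (as the simulated coordinator) holds some $t \in T$ with $\|t - x^*\|_2 \le (\varepsilon/\beta)^{1/2}$, where $x^* = N^{-1} \sum_j \sigma(b_j)$. Since Bob knows all $b_j$ for $j \ne i^*$, he forms $y = N^{-1}\sum_{j \ne i^*} \sigma(b_j)$ and $z = N(t - y)$, which lies within $N(\varepsilon/\beta)^{1/2}$ of $\sigma(b)$; as the separation $\delta = 3N(\varepsilon/\beta)^{1/2}$ of $S$ strictly exceeds $2N(\varepsilon/\beta)^{1/2}$, the point $z$ identifies a unique $s \in S$, and Bob outputs $\sigma^{-1}(s) = b$. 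The main subtlety I anticipate is precisely this balance: the separation of $S$ was chosen so that the $N$-fold blow-up of the approximation tolerance under the reduction remains below $\delta/2$, which is why $\delta$ was defined with constant $3$ rather than $2$. After this, the proof is just bookkeeping: applying Lemma~\ref{lemma:bits} gives $\ed^{1/3}_{\zeta_{1/2}}(\text{\textup{\textsf{2-BITS}}}_D) = \Omega(D)$, and substituting $D = \lceil \log \size{S}\rceil = \Theta(d\log(\beta d/N\varepsilon))$ yields the claimed bound.
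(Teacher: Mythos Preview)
Your proposal is correct and follows essentially the same approach as the paper: Yao's lemma to pass to the uniform product distribution $\mu$, the Phillips--Verbin--Zhang symmetrisation to reduce to two-party $\text{\textsf{2-BITS}}_D$, and the same geometric decoding using the $3N(\varepsilon/\beta)^{1/2}$ separation of $S$. The only cosmetic difference is that you fix a single node $i^*$ by averaging before the simulation, whereas the paper has Alice and Bob choose the index $i$ uniformly at random using public coins; both variants yield the same $C/N$ expected-cost bound and the same error guarantee because $\mu$ is a symmetric product measure.
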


\begin{proof}
Let $\mu$ denote a distribution on $\prod_{i = 1}^{N}\{ 0, 1 \}^D$, where each $D$-bit string is selected uniformly at random, and let $\zeta$ be uniformly random on $\{ 0, 1 \}^D$. We will prove that
\[ \dcc^{1/3}_\mu(\avg) = \Omega\bigl(N \cdot \ed^{1/3}_\zeta(\text{\textsf{2-BITS}}_D) \bigr)\,.\]
Since $\ed^{1/3}_\zeta(\text{\textsf{2-BITS}}_D) = \Omega(D)$ by Lemma~\ref{lemma:bits}, the claim follows by Yao's Lemma.

Suppose now that we have a deterministic protocol $\Pi_1$ for $\avg$ with worst-case communication cost $C$ and error probability $1/3$ on input distribution $\mu$. Given $\Pi_1$, we define a $2$-player protocol $\Pi_2$ with public randomness for $\text{\textsf{2-BITS}}_D$ as follows; assume that Alice is given $b \in \{ 0, 1 \}^D$ as input.
\begin{enumerate}
    \item Alice and Bob pick a random index $i \in [N]$ uniformly at to select a random $i$ node using the shared randomness. Without loss of generality, we can assume that they picked node $i = 1$.
    \item Alice and Bob simulate protocol $\Pi_1$, with Alice simulating node $1$ and Bob simulating the coordinator and nodes $2, 3, \dotsc, N$. For the inputs $b_1, b_2, \dotsc, b_N$ to $\Pi_1$, Alice sets $b_1 = x$, and Bob selects the inputs $b_2, b_3, \dotsc, b_N$ uniformly at random by using the public randomness. Messages $\Pi_1$ sends between the coordinator and node $1$ are communicated between Alice and Bob, and all other communication is simulated by Bob internally.
    \item Once the simulation is complete, Bob knows the output $t \in T$ of $\Pi_1$ which satisfies $\norm{t - z^*}_2 \le (\varepsilon/\beta)^{1/2}$, where $z^* = \sum_{i=1}^N \tau(b_i)/N$.
\end{enumerate}
As the final step, we show that Bob can now recover Alice's input from $t$.
Let $y = \sum_{i = 2}^N \tau(b_i)/(N-1)$ be the weighted average of points $\tau(b_2), \tau(b_3), \dotsc, \tau(b_N)$. We now have that  $N z^* - (N-1)y = \tau(b_1)$ by simple calculation.

Since $\norm{t - z^*}_2 \le (\varepsilon/\beta)^{1/2}$, it follows that
\begin{align*}
\norm{ (Nt - (N-1)y) - \tau(b_1) }_2 & = \norm{ Nt - (N-1)y - Nz^* + N z^* - \tau(b_1) }_2\\
                                       & = \norm{ Nt - N z^*  + Nz^* - (N-1)y - \tau(b_1) }_2\\
                                       & = \norm{ Nt - N z^*}_2 = N \norm{ t - z^*}_2\le N(\varepsilon/\beta)^{1/2}\,.
\end{align*}
Since the distance between any two points in $S$ is at least $3N(\varepsilon/\beta)^{1/2}$, we have that $\tau(b_1)$ is the only point from $S$ within distance $(\varepsilon/\beta)^{1/2}$ from $Nz-(N-1)y$. As Bob knows both $z$ and $\tau(b_2), \tau(b_3), \dotsc, \tau(b_N)$ after the simulation, he can recover the point $x_1$ and thus infer Alice's input.

Now let us analyse the expected cost of $\Pi_2$ under input distribution $\zeta$. First, observe that since the simulation runs $\Pi_1$ on input distribution $\mu$, the output $y$ is correct with probability $2/3$, and thus the output of $\Pi_2$ is correct with probability $2/3$. Now let $C_{\Pi_1}$ be the worst-case communication cost of $\Pi_1$ and let $C_{\Pi_1}(b_1, \dots, b_N)$ and $C_{\Pi_1,i}(b_1, \dots, b_N)$ denote the total communication cost and the communication used by node $i$ in $\Pi_1$ on input $b_1, \dotsc, b_N$, respectively. Finally, let $C_{\Pi_2}(b,r)$ be a random variable giving the communication cost of $\Pi_2$ on input $b$ and random bits $r$.

Now we have that 
\begin{align*}
\E_{b_1,r}[ C_{\Pi_2}(b_1,r) ] & = \sum_{b_1 \in \{0, 1\}^D} \frac{1}{2^D} \E_{r}[ C_{\Pi_2}(b_1,r) ] & \\
                             & = \sum_{b_1 \in \{0, 1\}^D} \frac{1}{2^D} \sum_{b_2, \dotsc, b_N} \sum_{i = 1}^N  \frac{C_{\Pi_1,i}(b_1, \dotsc, b_N)}{N2^{(N-1)D}} & \\
                             & = \frac{1}{N} \sum_{b_1, b_2, \dotsc, b_N} \frac{1}{2^{ND}} \sum_{i = 1}^N C_{\Pi_1,i}(b_1, \dotsc, b_N) &  \\
                             & = \frac{1}{N} \sum_{b_1, b_2, \dotsc, b_N} \frac{1}{2^{ND}} C_{\Pi_1}(b_1, b_2, \dotsc, b_N) & \\
                             & \le \frac{1}{N} \sum_{b_1, b_2, \dotsc, b_N} \frac{1}{2^{ND}} C_{\Pi_1} = \frac{C_{\Pi_1}}{N} & 
\end{align*}
Since $\E_{b_1,\beta}[ C_{\Pi_2}(b_1,r)] \ge \ed^{1/3}_\zeta(\text{\textsf{2-BITS}}_D)$, and the argument holds for any protocol $\Pi_1$ solving $\avg$ with error probability $1/3$, we have that 
\[ \dcc^{1/3}_\mu(\avg) \ge N \cdot \ed^{1/3}_\zeta(\text{\textsf{2-BITS}}_D) \,,\]
completing the proof.
\end{proof}
Theorem~\ref{thm:lower-bound-rand} now follows immediately from Lemmas~\ref{lemma:avg-red} and \ref{lemma:avg-lb}. The result can be generalised for arbitrary convex domains $\mathbb{D} \subseteq \mathbb{R}^d$ as $\Omega(N \log s)$, given a point packing bound $s$ for $\mathbb{D}$ as in Lemma~\ref{lemma:balls}.

\section{Deterministic lower bound}\label{sec:quadratic-det-lb} 

While there remains a small gap between our main lower bound of Theorem~\ref{thm:lower-bound-rand} and the deterministic quantised gradient descent of Section~\ref{sec:upper-bound}, we can show that the gap cannot be closed by improved deterministic algorithms where the coordinator learns value of objective function $F(x)$ in addition to the minimiser $x$. That is, our quantised gradient descent is the communication-optimal deterministic algorithm for variant (\ref{eqn:main}) for objectives with constant condition number.

Recall that in the $N$-player equality over universe of size $d$, denoted by $\mathsf{EQ}_{d,N}$, each player $i$ is given an input $b_i \in \{ 0, 1\}^d$, and the task is to decide if all players have the same input. That is, $\mathsf{EQ}_{d,N}(b_1, \dotsc, b_N) = 1$ if all inputs are equal, and $0$ otherwise. It is known~\citep{vempala2020communication} that the deterministic communication complexity of $\mathsf{EQ}_{d,N}$  is $\cc(\mathsf{EQ}_{d,N}) = \Omega(Nd)$.

\begin{theorem}\label{thm:lower-bound-det}
Given parameters $N$, $d$, $\varepsilon$, $\beta_0$ and $\beta = \beta_0 N$ satisfying $d\beta/\varepsilon = \Omega(1)$, any deterministic protocol solving (\ref{eqn:main}) for quadratic input functions $x \mapsto \beta_0 \norm{x - x_0}^2_2$ has communication complexity $\Omega\bigl(N d \log (\beta d/\varepsilon) \bigr)$, if the coordinator is also required to output estimate $r \in \mathbb{R}$ for the minimum function value such that $\sum_{i = 1}^N f_i(z) \le r \le \sum_{i = 1}^N f_i(z) + \varepsilon$.
\end{theorem}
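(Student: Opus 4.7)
My plan is to reduce from the $N$-party equality problem $\mathsf{EQ}_{D,N}$ with $D = \Theta(d \log(\beta d / \varepsilon))$, whose deterministic complexity $\cc(\mathsf{EQ}_{D,N}) = \Omega(ND)$ extends from the cited bound $\cc(\mathsf{EQ}_{d,N}) = \Omega(Nd)$ by the same argument. The extra $\log N$ factor over Theorem~\ref{thm:lower-bound-rand} must come from the additional output $r$, which provides enough extra information to embed EQ instances via a denser point packing than the one used in the randomised lower bound (of pairwise distance $\Theta(\sqrt{\varepsilon/\beta})$ instead of $\Theta(N\sqrt{\varepsilon/\beta})$).

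\textbf{Setup and reduction.} Apply Lemma~\ref{lemma:balls} to obtain a packing $S \subseteq [0,1]^d$ with minimum pairwise distance $\delta = \Theta(\sqrt{\varepsilon/\beta})$ and size $|S| \geq 2^D$ for $D = \Theta(d \log(\beta d/\varepsilon))$, and identify each $b \in \{0,1\}^D$ with a point $\sigma(b) \in S$. Given an $\mathsf{EQ}_{D,N}$ instance $(b_1, \dotsc, b_N)$, node $i$ sets $f_i(x) = \beta_0 \|x - \sigma(b_i)\|_2^2$ and the nodes run the assumed deterministic protocol for (\ref{eqn:main}). As in Theorem~\ref{thm:lower-bound-rand}, $F(x) = \beta \|x - x^*\|_2^2 + C$ with $x^* = \tfrac{1}{N}\sum_i \sigma(b_i)$ and $C = \beta_0 \sum_i \|\sigma(b_i) - x^*\|_2^2$, so the coordinator's output $(z, r)$ satisfies $\|z - x^*\|_2 \leq \sqrt{\varepsilon/\beta}$ and $r \in [C, C + 2\varepsilon]$, using $F(z) \le C + \varepsilon$.

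\textbf{Using $r$ to decide EQ.} The plan is to adapt the symmetrisation argument of Lemma~\ref{lemma:avg-lb}, but leveraging $r$ to accommodate the denser packing. In the simulation, ``Bob'' (simulating all nodes except ``Alice'' at node $1$) knows $\sigma(b_2), \dotsc, \sigma(b_N)$ exactly, so he can compute $G_2 = \beta_0 \sum_{i \geq 2} \|z - \sigma(b_i)\|_2^2$ exactly and extract $\|z - \sigma(b_1)\|_2^2 = (F(z) - G_2)/\beta_0$ to precision $\varepsilon/\beta_0 = N\varepsilon/\beta$. Combined with the estimate $\sigma(b_1) \approx Nz - \sum_{i \geq 2} \sigma(b_i)$ accurate to $N\sqrt{\varepsilon/\beta}$ from the randomised-reduction computation, Bob has two constraints on $\sigma(b_1)$: a ball centred at $\tilde{x}_1 := Nz - \sum_{i\ge 2}\sigma(b_i)$, and a sphere centred at $z$. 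I aim to show that these constraints together isolate $\sigma(b_1)$ uniquely in the denser packing, so that the $\text{\textsf{2-BITS}}_D$-style reduction gives the per-node cost $\Omega(D)$ and thus $\Omega(N \cdot D) = \Omega(Nd \log(\beta d/\varepsilon))$ total.

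\textbf{Main obstacle.} The principal obstacle will be the $d$-dimensional geometry estimate showing that the ball-sphere intersection in the above argument contains at most one packing point. A naive volume computation suggests the intersection can be a spherical cap containing $\Theta(N)$ packing points, which would only recover the randomised bound; resolving this needs either a more refined use of $r$ (for instance, exploiting the sign/direction information of $F(z) - G_2$ or chaining multiple value constraints), or, more robustly, a direct rectangle-covering bound in which one argues that any monochromatic rectangle of a correct protocol must have both a bounded ``$z$-diameter'' (from the constraint $\|x^* - x^{*\prime}\|_2 \le 2\sqrt{\varepsilon/\beta}$) and a bounded ``$r$-spread'' (from $|C - C'| \le 2\varepsilon$), with the two constraints combining multiplicatively to limit the rectangle size enough that $\Omega(ND)$ leaves are forced. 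I expect the rectangle argument, rather than the ball-and-sphere simulation, to be the cleanest route, and the equality reduction hint to serve as the bridge between this rectangle bound and the desired $\mathsf{EQ}_{D,N}$ complexity.
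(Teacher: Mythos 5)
Your choice of reduction source ($\mathsf{EQ}_{D,N}$ with $D=\Theta(d\log(\beta d/\varepsilon))$, via the packing of Lemma~\ref{lemma:balls}) and your hard instances $f_i(x)=\beta_0\norm{x-\sigma(b_i)}_2^2$ coincide with the paper's, but the core of your argument is exactly the step you flag as unresolved, and it is a genuine gap: neither the ball--sphere isolation nor the sketched rectangle-covering bound is carried out, and the isolation step really does fail -- knowing $F(z)-G_2$ up to $\varepsilon$ pins $\norm{z-\sigma(b_1)}_2^2$ only up to $N\varepsilon/\beta$, and the resulting spherical shell intersected with the radius-$N(\varepsilon/\beta)^{1/2}$ ball around $Nz-(N-1)y$ is a cap that can contain far more than $\Theta(N)$ points of a $(\varepsilon/\beta)^{1/2}$-spaced packing in $d$ dimensions (on the order of $N^{\Theta(d)}$), so Bob cannot uniquely decode $\sigma(b_1)$. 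The idea you are missing is that for an equality reduction no per-node input recovery is needed at all: equality is a one-bit global decision, and the extra output $r$ supplies it by simple thresholding. The paper's proof runs the assumed protocol on these inputs and observes that (i) if all $b_i$ are equal then $\min F=0$, so $r\le F(z)+\varepsilon\le 2\varepsilon$, while (ii) if some two inputs differ, then for every $x$ one has $f_i(x)+f_j(x)\ge \tfrac12\beta_0\norm{\tau(b_i)-\tau(b_j)}_2^2$, which the packing separation pushes above $2\varepsilon$, hence $r\ge F(z)\ge F(x^*)>2\varepsilon$; the coordinator just compares $r$ with $2\varepsilon$. This solves $\mathsf{EQ}_{D,N}$ with the protocol's communication, and the cited deterministic bound $\cc(\mathsf{EQ}_{D,N})=\Omega(ND)$ finishes the proof -- no symmetrisation, no $\text{\textsf{2-BITS}}$ or distributional argument, no rectangle bound. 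Your per-node symmetrisation framework is both unnecessary here and structurally wrong for this statement, since it can only recover the randomised-style bound it was designed for.

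One remark that partially vindicates your unease about the denser packing: the YES/NO gap in the minimum value scales as $\beta_0\delta^2=\beta\delta^2/N$, so for the $2\varepsilon$ threshold the separation argument effectively wants packing distance $\delta=\Theta((\varepsilon/\beta_0)^{1/2})$ rather than $\Theta((\varepsilon/\beta)^{1/2})$; the paper's appendix states $\delta=(2\varepsilon/\beta)^{1/2}$ and asserts the $2\varepsilon$ separation, and this constant bookkeeping (which determines whether the logarithm reads $\log(\beta d/\varepsilon)$ or $\log(\beta d/N\varepsilon)$) is delicate. But the structural content your proposal lacks is the threshold reduction itself: the value estimate $r$ makes the minimum of $F$ the distinguishing statistic, which is precisely why the deterministic strengthening requires only the promise about $r$ and nothing resembling input reconstruction.
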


\begin{proof}
Assume $\Pi$ is a deterministic protocol solving (\ref{eqn:main}) with communication complexity $C_\Pi$. We  show that $\Pi$ can then solve $N$-party equality over a universe of size $D = \Omega(d \log (\beta d/\varepsilon))$, implying
\[ C_\Pi = \Omega(ND) = \Omega\bigl(N d \log (\beta d/\varepsilon) \bigr)\,.\]

More specifically, let $S$ be the set given by Lemma~\ref{lemma:balls} with $\delta = (2\varepsilon/\beta)^{1/2}$, and let $D = \lceil \log \size{S} \rceil = \Theta(d \log (\beta d / \varepsilon))$. Note that since we assume $d\beta/\varepsilon = \Omega(1)$, the set $S$ has at least two elements and $D \ge 1$. For technical convenience, assume $\size{S} = 2^D$, and identify each binary string $b \in \{ 0, 1 \}^D$ with an element $\tau(b) \in S$. 

Next, assume that each node $i$ is given a binary string $b_i \in \{ 0, 1 \}^D$ as input, and we want to compute $\mathsf{EQ}_{D,N}(b_1, b_2, \dotsc, b_N)$. The nodes simulate protocol $\Pi$ with input function $f_i$ for node $i$, where  $f_i(x) =  \beta_0 \norm{x - \tau(b_i)}^2_2$. Let us denote $F = \sum_{i = 1}^d f_i$. 
Upon termination of the protocol, the coordinator learns a point $y \in [ 0, 1 ]^d$ satisfying $F(y) \le F(x^*) + \varepsilon$ and an estimate $r \in \Reals$ satisfying $r \le F(y) + \varepsilon$, where $x^*$ is the true global minimum. The coordinator can now adjudicate  equality based on $F(y)$ as follows:
\begin{enumerate}
    \item If all inputs $b_i$ are equal, then the functions $f_i$ are also equal, and $F(x^*) = 0$. In this case, we have $F(y) \le 2\varepsilon$, and the coordinator outputs $1$.
    \item If there are nodes $i$ and $j$ such that $i \ne j$, then for all points $x \in [ 0, 1 ]^d$, we have $f_i(x) + f_j(x) > 2\varepsilon$ by the definition of $S$, and thus $F(x^*) > 2\varepsilon$. In this case, we have $r > 2\varepsilon$, and the coordinator outputs $0$.
\end{enumerate}
Since communication is only used for the simulation of $\Pi$, this computes $\mathsf{EQ}_{D,N}(b_1, b_2, \dotsc, b_N)$ with $C_\Pi$ total communication, completing the proof.
\end{proof}

\section{Communication-optimal quantised gradient descent}\label{sec:upper-bound}
We now describe our deterministic upper bound. Our algorithm uses quantised gradient descent, loosely following the outline of \citet{magnusson2019maintaining}. However, there are two crucial differences. First, we use a carefully-calibrated instance of the quantisation scheme of \citet{alistarh2020distributed} to remove a $\log d$ factor from the communication cost, and second, we use use two-step quantisation to avoid all-to-all communication.

\paragraph{Preliminaries on gradient descent.}
We will assume that the input functions $f_i \colon [ 0 , 1 ]^d \to \Reals$ are $\alpha_0$-strongly convex and $\beta_0$-strongly smooth. This implies that $F = \sum_{i = 1}^N f_i $ is $\alpha$-strongly convex and $\beta$-strongly smooth for $\alpha = N\alpha_0$ and $\beta = N\beta_0$. Consequently, the functions $f_i$ and $F$ have condition number bounded by $\kappa = \beta/\alpha$. 

\emph{Gradient descent} optimises the sum $\sum_{i = 1}^N f_i(x)$ by starting from an arbitrary point $x^{(0)} \in [0,1]^d$, and applying the update rule
\[ x^{(t+1)} = x^{(t)} - \gamma \sum_{i = 1}^N \nabla f_i(x^{(t)})\,,\]
where $\gamma > 0$  is a parameter.

Let $x^*$ denote the global minimum of $F$. We use the following standard result on the convergence of gradient descent; see e.g.~\citet{bubeck}.
\begin{theorem}\label{thm:gd}
For $\gamma = 2/(\alpha + \beta)$, we have that $\norm{x^{(t+1)} - x^*}_2 \le \frac{\kappa-1}{\kappa+1}\norm{x^{(t)} - x^*}_2$.
\end{theorem}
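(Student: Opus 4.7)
The plan is to prove the standard one-step contraction bound for gradient descent on an $\alpha$-strongly convex, $\beta$-strongly smooth function $F = \sum_{i=1}^N f_i$, applied with $\gamma = 2/(\alpha+\beta)$. The starting observation is that since $x^*$ is the global minimum, $\nabla F(x^*) = 0$, so the update rule can be rewritten as $x^{(t+1)} - x^* = (x^{(t)} - x^*) - \gamma\bigl(\nabla F(x^{(t)}) - \nabla F(x^*)\bigr)$. Expanding the squared norm gives
\[\norm{x^{(t+1)} - x^*}^2_2 = \norm{x^{(t)} - x^*}^2_2 - 2\gamma\, \nabla F(x^{(t)})^T(x^{(t)} - x^*) + \gamma^2 \norm{\nabla F(x^{(t)})}^2_2\,,\]
so the task reduces to controlling the inner product and the gradient norm simultaneously.

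Next, I would invoke the standard co-coercivity inequality for functions that are both $\alpha$-strongly convex and $\beta$-strongly smooth: for all $x, y$,
\[\bigl(\nabla F(x) - \nabla F(y)\bigr)^T(x-y) \ge \frac{\alpha\beta}{\alpha + \beta}\norm{x-y}^2_2 + \frac{1}{\alpha+\beta}\norm{\nabla F(x) - \nabla F(y)}^2_2\,.\]
This is a textbook consequence of combining strong convexity with the fact that $F - \tfrac{\alpha}{2}\norm{\cdot}^2$ is convex and $(\beta-\alpha)$-smooth, and then applying the Baillon--Haddad co-coercivity inequality to that shifted function; I would simply cite \citet{bubeck} for it, matching the citation already used in the excerpt. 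Setting $y = x^*$ so that $\nabla F(y) = 0$ yields a lower bound on $\nabla F(x^{(t)})^T(x^{(t)} - x^*)$ that jointly dominates $\norm{x^{(t)} - x^*}^2_2$ and $\norm{\nabla F(x^{(t)})}^2_2$.

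Substituting this bound into the expansion gives
\[\norm{x^{(t+1)} - x^*}^2_2 \le \Bigl(1 - \frac{2\gamma\alpha\beta}{\alpha+\beta}\Bigr)\norm{x^{(t)} - x^*}^2_2 + \gamma\Bigl(\gamma - \frac{2}{\alpha+\beta}\Bigr)\norm{\nabla F(x^{(t)})}^2_2\,.\]
The whole point of the step size $\gamma = 2/(\alpha+\beta)$ is that the coefficient of $\norm{\nabla F(x^{(t)})}^2_2$ vanishes exactly, leaving
\[\norm{x^{(t+1)} - x^*}^2_2 \le \Bigl(1 - \frac{4\alpha\beta}{(\alpha+\beta)^2}\Bigr)\norm{x^{(t)} - x^*}^2_2 = \Bigl(\frac{\beta-\alpha}{\alpha+\beta}\Bigr)^2 \norm{x^{(t)} - x^*}^2_2\,.\]
Taking square roots and rewriting $(\beta-\alpha)/(\alpha+\beta) = (\kappa-1)/(\kappa+1)$ via $\kappa = \beta/\alpha$ yields the claim.

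The only mildly nontrivial ingredient is the co-coercivity inequality, but since it is standard and the theorem explicitly points to \citet{bubeck}, I would treat it as a black-box fact rather than reprove it. No obstacle beyond that: the rest is algebra driven entirely by the choice of step size designed to kill the gradient-norm term.
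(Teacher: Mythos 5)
Your proof is correct: the expansion of the squared distance, the co-coercivity inequality for $\alpha$-strongly convex and $\beta$-smooth functions, and the choice $\gamma = 2/(\alpha+\beta)$ killing the gradient-norm term give exactly the contraction factor $(\beta-\alpha)/(\alpha+\beta) = (\kappa-1)/(\kappa+1)$. The paper does not prove this statement itself but defers to \citet{bubeck}, and your argument is precisely the standard proof found there, so there is nothing to reconcile.
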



\paragraph{Preliminaries on quantisation.} 
For compressing the gradients the nodes will send to coordinator, we use the recent quantisation scheme of \citet{alistarh2020distributed}. Whereas the original uses randomised selection of the quantisation point to obtain a unbiased estimator, we can use a deterministic version that picks an arbitrary feasible quantisation point (e.g.~the closest one).
This gives the following guarantees:

\begin{restatable}[\cite{alistarh2020distributed}]{corollary}{quantisation}
 \label{lemma:quant}
Let $R$ and $\varepsilon$ be fixed positive parameters, and $q \in \Reals^d$ be an estimate vector, and $B \in \mathbb{N}$ be the number of bits used by the quantisation scheme. 
Then, there exists a deterministic quantisation scheme, specified by a function $Q_{\varepsilon, R} \colon \Reals^d \times \Reals^d \to \Reals^d$, an encoding function $\enc_{\varepsilon, R} \colon \Reals^d \to \{ 0, 1\}^B$, and a decoding function $\dec_{\varepsilon, R} \colon \Reals^d \times \{ 0, 1\}^B \to \Reals^d$, with the following properties:
\begin{enumerate}
    \item (Validity.) $\dec_{\varepsilon, R}(q, \enc_{\varepsilon, R}( x )) = Q_{\varepsilon, R}(x,q)$ for all $ x, q \in \Reals^d$ with $\norm{x - q}_2 \le R.$
    \item (Accuracy.) $\norm{Q_{\varepsilon, R} (x, q) - x }_2 \leq \varepsilon$ for all $ x, q \in \Reals^d$ with $\norm{x - q}_2 \le R.$
    \item (Cost.) If $\varepsilon = \lambda R$ for any $\lambda < 1$, the bit cost of the scheme satisfies $B = O(d \log \lambda^{-1})$. 
\end{enumerate}
\end{restatable}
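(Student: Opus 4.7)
The plan is to derandomise the lattice-based quantiser of \citet{alistarh2020distributed} by retaining the lattice construction but replacing the unbiased randomised rounding with deterministic rounding to the nearest lattice point, which preserves the worst-case accuracy and the bit budget.

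Fix a lattice $\Lambda \subset \Reals^d$ whose covering radius equals $\varepsilon$ and whose covering density satisfies $\mathrm{vol}(B(\varepsilon))/\mathrm{vol}(\Lambda) \le c^d$ for an absolute constant $c$; here $B(r)$ denotes the Euclidean ball of radius $r$ around the origin and $\mathrm{vol}(\Lambda)$ is the volume of a fundamental domain. Such a lattice exists: even a suitably rescaled copy of $\mathbb{Z}^d$ works, with $c = \sqrt{\pi e / 2}$, by a routine Stirling estimate. Given the shared estimate $q$, both parties agree on a canonical enumeration of the finite set
\[ S_q \;=\; \Lambda \cap \{ y \in \Reals^d : \norm{y - q}_2 \le R + \varepsilon \} \]
by binary strings of length $B = \lceil \log_2 \size{S_q} \rceil$; the enumeration depends only on $q$. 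Define $Q_{\varepsilon, R}(x, q)$ to be the element of $S_q$ closest to $x$, breaking ties canonically. Let $\enc_{\varepsilon, R}(x)$ output the index of $Q_{\varepsilon, R}(x, q)$, and let $\dec_{\varepsilon, R}(q, b)$ return the element of $S_q$ with index $b$.

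Validity is then immediate from the construction. For accuracy, whenever $\norm{x - q}_2 \le R$, the covering-radius property supplies a point of $\Lambda$ within distance $\varepsilon$ of $x$, and such a point necessarily lies in $S_q$; hence $\norm{Q_{\varepsilon, R}(x, q) - x}_2 \le \varepsilon$. For the cost bound, a standard volume argument using the covering density gives
\[ \size{S_q} \;\le\; \frac{\mathrm{vol}(B(R + \varepsilon))}{\mathrm{vol}(\Lambda)} \;\le\; c^d \Bigl( \frac{R + \varepsilon}{\varepsilon} \Bigr)^d \;=\; \bigl( c(1 + \lambda^{-1}) \bigr)^d, \]
so $B = O(d \log \lambda^{-1})$ whenever $\lambda < 1$ is bounded away from $1$.

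The main technical obstacle is the cost bound: getting $O(d \log \lambda^{-1})$ rather than $O(d \log(d^{1/2}/\lambda))$ requires a lattice whose covering radius is within an absolute-constant factor of its packing radius, equivalently whose covering density is bounded by an absolute constant raised to the $d$-th power. Rescaled $\mathbb{Z}^d$ already suffices (the Stirling estimate yields $c = \sqrt{\pi e/2}$); the more sophisticated lattices used in \citet{alistarh2020distributed} merely improve the hidden constant. Everything else in the argument is bookkeeping once the lattice has been fixed.
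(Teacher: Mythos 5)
The paper itself does not prove this corollary: it is imported from \citet{alistarh2020distributed}, together with the one-line remark that the randomised choice of quantisation point can be replaced by a deterministic choice of an arbitrary feasible (e.g.\ closest) point. Your reconstruction is sound as far as it goes, but it establishes a weaker statement than the one asserted. In the corollary, $\enc_{\varepsilon,R}$ is a function of $x$ alone, and validity and accuracy are quantified over \emph{all} $q$ with $\norm{x-q}_2\le R$: the encoder is oblivious to the decoder's side information $q$. Your encoder is not — the string it outputs is an index into the enumeration of $S_q$, which depends on $q$. This obliviousness is exactly what makes the cited scheme non-trivial (``arbitrary centring''), and it is where your closing claim that the more sophisticated lattices of \citet{alistarh2020distributed} ``merely improve the hidden constant'' fails. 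To make the encoder $q$-oblivious one must encode the nearest fine-lattice point only up to a coarse sublattice whose minimum distance exceeds $2(R+\varepsilon)$, letting the decoder disambiguate the coset using $q$; with rescaled $\mathbb{Z}^d$ the coarse lattice's minimum distance falls short of the $d$-th root of its covolume by a factor $\Theta(d^{1/2})$, so the coset count is $\Theta\bigl((d^{1/2}\lambda^{-1})^d\bigr)$ and the cost becomes $O(d\log d + d\log\lambda^{-1})$ — precisely the $\log d$ overhead the paper is at pains to remove. Achieving $O(d\log\lambda^{-1})$ with a $q$-oblivious encoder requires nested lattices that are simultaneously a good covering (fine lattice) and a good packing (coarse lattice), which is the real content of the cited construction, not a constant-factor refinement.

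That said, the gap is against the statement, not against its use: in the algorithm of Section~\ref{sec:upper-bound} every encoding party knows the relevant estimate ($q_i^{(t)}$ for node $i$, $q^{(t)}$ for the coordinator, both reconstructible inductively from their own past messages and the initialisation at the origin), so your ``encoder knows $q$'' variant would suffice there if the corollary were restated with $\enc$ taking $q$ as an additional argument. Two minor points of bookkeeping: the Voronoi cells of the points of $S_q$ are contained in the ball of radius $R+2\varepsilon$ (not $R+\varepsilon$) around $q$, so the count is $\bigl(c(\lambda^{-1}+2)\bigr)^d$, which changes nothing asymptotically; and the cost bound should be read as $O\bigl(d(1+\log\lambda^{-1})\bigr)$, which is how it is applied in the paper, where $\lambda \le \delta/2 < 1/2$.
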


\subsection{Algorithm description}

We now describe the algorithm, and overview its guarantees. 
We assume that the constants $\alpha$ and $\beta$ are known to all nodes, so the parameters of the quantised gradient descent can be computed locally, and use $W$ to be an upper bound on the diameter on the convex domain $\mathbb{D}$, e.g. $W = d^{1/2}$ if $\mathbb{D} = [0, 1]^d$. 
We assume that the initial iterate $x^{(0)}$ is arbitrary, but the same at all nodes, and set the initial quantisation estimate $q_i^{(0)}$ at each  $i$ as the origin. 

We define the following parameters for the algorithm. 
Let $\gamma = 2/(\alpha + \beta)$ and $\xi = \frac{\kappa - 1}{\kappa + 1}$ be the step size and convergence rate of gradient descent, and let $W$ be such that $\norm{x^{(0)} - x^*} \le W$. We define
\[
\mu = 1 - \frac{1}{\kappa + 1} \,, \hspace{10mm} \delta  = \xi(1 - \xi)/4, \hspace{10mm} R^{(t)}   = \frac{2\beta}{\xi}  \mu^{t} W\,,
 \]
where $\mu$ will be the convergence rate of our quantised gradient descent, and $\delta$ and $R^{(t)}$ will be parameters controlling the quantisation at each step.
For the purposes of analysis, we assume that $\kappa \ge 2$. Note that this implies that $1/3 \le \xi < 1$, $\mu < 1$, and $0 < \delta < 1$.

The algorithm proceeds in rounds $t = 1, 2, \dotsc, T$. At the beginning of round $t+1$, each node $i$ knows the values of the iterate $x^{(t)}$, the global quantisation estimate $q^{(t)}$, and its local quantisation estimate $q^{(t)}_i$ for $i = 1, 2, \dotsc, N$.
At step $t$, nodes perform the following steps:
\begin{enumerate}
    \item Each node $i$ updates its iterate as $x^{(t+1)} = x^{(t)} - \gamma q^{(t)}$.
    \item Each node $i$ computes its local gradient over $x^{(t+1)}$, and transmits it in quantised form to the coordinator as follows. Let $\varepsilon_1 = \delta R^{(t+1)}/2N$ and $\rho_1 = R^{(t+1)}/N$.
    \begin{enumerate}[label=(\alph*)]
        \item  Node $i$ computes $\nabla f_i(x^{(t+1)})$ locally, and sends message
        $m_i = \enc_{ \varepsilon_1, \rho_1}( \nabla f_i(x^{(t+1)}) )$
        to the coordinator.
        \item The coordinator receives messages $m_i$ for $i = 1, 2, \dotsc, N$, and decodes them as $q_i^{(t+1)} =  \dec_{{ \varepsilon_1, \rho_1}}(q_i^{(t)}, m_i)$. The coordinator then computes $r^{(t+1)} = \sum_{i=1}^N q_i^{(t+1)}$.
    \end{enumerate}
    \item  The coordinator sends the quantised sum of gradients to all other nodes as follows. Let $\varepsilon_2 = \delta R^{(t+1)}/2$ and $\rho_2 = 2R^{(t+1)}$.
    \begin{enumerate}[label=(\alph*)]
        \item The coordinator sends the message $m = \enc_{\varepsilon_2,\rho_2}( r^{(t+1)})$ to each node $i$.
        \item Each node decodes the coordinator's message as $q^{(t+1)} = \dec_{\varepsilon_2,\rho_2}( q^{(t)}, m)$.
    \end{enumerate}
\end{enumerate}
After round $T$, all nodes know the final iterate $x^{(T)}$. 

\subsection{Analysis} 

The key technical trick behind the algorithm is the extremely careful choice of parameters for  quantisation at every step. This balances the fact that the quantisation has to be fine enough to ensure optimal GD convergence, but coarse enough to ensure optimal communication cost. 
Overall, the algorithm ensures the following guarantees.

\begin{theorem}\label{thm:gd-main}
Let $\varepsilon > 0$, a dimension $d$, and a convex domain $\mathbb{D} \subseteq \Reals^d$ of diameter $W$ be fixed. Given $N$ nodes, each assigned a function $f_i \colon \mathbb{D} \rightarrow \Reals$ such that $F = \sum_{i = 1}^N f_i$ is $\alpha$-strongly convex and $\beta$-smooth, the above algorithm converges to a point $x^{(T)}$ with $F(x^{(T)}) \le F(x^*) +  \varepsilon$ using
\[ O\Bigl( Nd \kappa\log \kappa \log\frac{\beta W}{\varepsilon}\Bigr) \text{ bits of communication.} \]
\end{theorem}

For simplicity, we will split the analysis into two parts. The first describes and analyses the algorithm in an abstract way; the second part describes the details of implementing it in the coordinator model. For technical convenience, assume $\kappa \ge 2$; for smaller condition numbers, we can run the algorithm with $\kappa = 2$. 

\paragraph{Convergence.}
Let $\gamma = 2/(\alpha + \beta)$, let $x^{(0)} \in [0,1]^d$, $q^{(0)}_i \in \Reals^d$ and $q^{(0)}_i \in \Reals^d$ for $i = 1, 2, \dotsc, N$ be arbitrary initial values.
From the algorithm description, we see that the update rule for our quantised gradient descent is 
\begin{align*}
x^{(t+1)} & = x^{(t)} - \gamma q^{(t)}\,, && \\
q_i^{(t+1)} & =  Q_{\varepsilon_1, \rho_1 }\bigl(\nabla f_i(x^{(t+1)}), q_i^{(t)}\bigr) && \text{for $\varepsilon_1 = \delta R^{(t+1)}/2N$ and $\rho_1 = R^{(t+1)}/N$,}\\
r^{(t+1)} & =  \sum_{i = 1}^N q^{(t+1)}_i\,, & \\
q^{(t+1)} & = Q_{\varepsilon_2, \rho_2 }\bigl(r^{(t+1)}, q^{(t)}\bigr) && \text{for $\varepsilon_2 = \delta R^{(t+1)}/2$ and $ \rho_2 = 2R^{(t+1)}$.}
\end{align*}

\begin{lemma}\label{lemma:qgd-ineq}
The inequalities 
\begin{align}
& \norm{x^{(t)} - x^*}_2 \le \mu^t W\,, \tag{Q1}\label{eq:conv}\\
& \norm{\nabla f_i(x^{(t)})  - q^{(t)}_i}_2 \le \delta R^{(t)}/2N\,, \tag{Q2}\label{eq:local-apx}\\
& \norm{\nabla F(x^{(t)}) - q^{(t)}}_2 \le \delta R^{(t)} \tag{Q3}\label{eq:gapx}
\end{align}
hold for all $t$, assuming that they hold for $x^{(0)}$, $q^{(0)}$ and $q^{(0)}_i$ for $i = 1, 2, \dotsc, N$.
\end{lemma}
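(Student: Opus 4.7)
The natural approach is induction on $t$, with the base case at $t=0$ assumed by hypothesis. For the inductive step, I would establish (Q1), then (Q2), then (Q3) at round $t+1$; this order is forced because the proof of (Q3) at $t+1$ relies on (Q2) at $t+1$.

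\emph{For (Q1).} Decompose $x^{(t+1)} - x^* = \bigl(x^{(t)} - \gamma \nabla F(x^{(t)}) - x^*\bigr) + \gamma\bigl(\nabla F(x^{(t)}) - q^{(t)}\bigr)$. The first summand is the ideal gradient-descent step, bounded via \theoremref{thm:gd} by $\xi \norm{x^{(t)}-x^*}_2 \le \xi\mu^t W$; the second summand is at most $\gamma \delta R^{(t)}$ by (Q3) at $t$. Substituting $R^{(t)} = (2\beta/\xi)\mu^t W$, $\gamma\beta = 2\kappa/(\kappa+1)$, and $\delta = \xi(1-\xi)/4$, and exploiting the identity $\mu - \xi = (1-\xi)/2$, should yield $\norm{x^{(t+1)}-x^*}_2 \le \mu^{t+1}W$.

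\emph{For (Q2).} The crux is verifying the validity hypothesis of Corollary~\ref{lemma:quant} for the node-to-coordinator quantisation, namely $\norm{\nabla f_i(x^{(t+1)}) - q_i^{(t)}}_2 \le \rho_1 = R^{(t+1)}/N$. By triangle inequality this is at most $\beta_0 \norm{x^{(t+1)}-x^{(t)}}_2 + \delta R^{(t)}/(2N)$, using $\beta_0$-smoothness of $f_i$ and (Q2) at $t$. The displacement $\norm{x^{(t+1)}-x^{(t)}}_2 = \gamma\norm{q^{(t)}}_2$ is then controlled by $\norm{q^{(t)}}_2 \le \norm{\nabla F(x^{(t)})}_2 + \delta R^{(t)}$ (from (Q3) at $t$) combined with $\norm{\nabla F(x^{(t)})}_2 \le \beta\mu^t W$ (using $\beta$-smoothness of $F$, $\nabla F(x^*)=0$, and (Q1) at $t$). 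Once validity is confirmed, the accuracy guarantee of Corollary~\ref{lemma:quant} delivers (Q2) at $t+1$ directly.

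\emph{For (Q3) and main obstacle.} Summing the bounds of (Q2) at $t+1$ over $i$ yields $\norm{r^{(t+1)} - \nabla F(x^{(t+1)})}_2 \le \delta R^{(t+1)}/2$. To verify the coordinator-side validity $\norm{r^{(t+1)}-q^{(t)}}_2 \le \rho_2 = 2R^{(t+1)}$, I bound
\[
\norm{r^{(t+1)} - q^{(t)}}_2 \le \norm{r^{(t+1)}-\nabla F(x^{(t+1)})}_2 + \beta\norm{x^{(t+1)}-x^{(t)}}_2 + \norm{\nabla F(x^{(t)}) - q^{(t)}}_2,
\]
reusing the displacement estimate from the previous step and invoking (Q3) at $t$ for the third term. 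Corollary~\ref{lemma:quant} then yields $\norm{q^{(t+1)}-r^{(t+1)}}_2 \le \delta R^{(t+1)}/2$, and a final triangle inequality combines this with the opening bound to give (Q3) at $t+1$. The hard part throughout is constant-accounting: both validity conditions and the (Q1) recursion all rely on the specific choices of $R^{(t)}$ and $\delta$ together with the identities $\mu - \xi = 1/(\kappa+1)$, $\gamma\beta \le 2$, and $1 + 2\delta/\xi \le 3/2$, all of which require the standing assumption $\kappa\ge 2$; the remaining ingredients are routine triangle inequalities, smoothness, and appeals to Corollary~\ref{lemma:quant}.
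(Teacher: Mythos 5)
Your plan mirrors the paper's proof almost step for step: the same induction, the same order (Q1)$\to$(Q2)$\to$(Q3), (Q1) proved by adding and subtracting $\gamma\nabla F(x^{(t)})$ and invoking Theorem~\ref{thm:gd} together with (Q3) at $t$, and (Q2)/(Q3) proved by first verifying the validity radius of Corollary~\ref{lemma:quant} and then invoking its accuracy guarantee, with (Q3) split through $r^{(t+1)}$ exactly as in the paper. The one genuine difference is how you control the displacement $\norm{x^{(t+1)}-x^{(t)}}_2$. The paper bounds it by $\norm{x^{(t+1)}-x^*}_2+\norm{x^{(t)}-x^*}_2\le 2\mu^t W$ using (Q1) at $t$ and $t+1$; you instead write it as $\gamma\norm{q^{(t)}}_2$ and bound $\norm{q^{(t)}}_2\le\norm{\nabla F(x^{(t)})}_2+\delta R^{(t)}$ with $\norm{\nabla F(x^{(t)})}_2\le\beta\norm{x^{(t)}-x^*}_2$. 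This needs $\nabla F(x^*)=0$, i.e.\ an interior minimiser, which the paper's route does not, and it leaves essentially no slack in the (Q2) validity check $\norm{\nabla f_i(x^{(t+1)})-q_i^{(t)}}_2\le R^{(t+1)}/N$: that check only closes if you use the exact identity $\gamma\beta=\tfrac{2\kappa}{\kappa+1}=1+\xi$, whereas with the crude bound $\gamma\beta\le2$ that you list among your ingredients it fails once $\xi>4/5$ (i.e.\ $\kappa>9$). I would simply adopt the paper's displacement bound, which is tighter, simpler, and assumption-free.

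A more serious warning concerns the constant-accounting you defer. The substitution you describe for (Q1) does \emph{not} yield the claim with the parameters exactly as defined: since $R^{(t)}=(2\beta/\xi)\mu^t W$ and $\delta=\xi(1-\xi)/4$, one gets $\gamma\delta R^{(t)}=\gamma\beta\cdot\tfrac{1-\xi}{2}\,\mu^t W=\mu(1-\xi)\mu^t W$, while the available slack is $(\mu-\xi)\mu^t W=\tfrac{1-\xi}{2}\mu^t W$; because $\mu\ge 2/3$, the required inequality $\mu(1-\xi)\le\tfrac{1-\xi}{2}$ fails for every $\kappa\ge2$. (The paper's own display obscures this by dropping a factor $2$ when expanding $R^{(t)}$, writing $\gamma\delta R^{(t)}=(\gamma\beta\delta/\xi)\mu^tW$ instead of $(2\gamma\beta\delta/\xi)\mu^tW$.) The lemma survives with a slightly smaller quantisation parameter, e.g.\ $\delta=\xi(1-\xi)/8$, which restores $2\gamma\beta\delta/\xi\le\mu-\xi$ and leaves the (Q2)/(Q3) validity checks and the $O(d\log\kappa)$ per-message cost untouched, but this is exactly the bookkeeping your plan asserts rather than performs, so carry it out explicitly before claiming the induction closes.
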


\begin{proof}
We apply induction over $t$; we assume that the inequalities (Q1-Q3) hold for $t$, and prove that they also hold for $t + 1$. Since we assume the inequalities hold for $t = 0$, the base case is trivial. By elementary computation, the following hold:
\[  0 < \xi < 1\,, \hspace{10mm} 0  < \delta < 1\,, \hspace{10mm} 2\delta/\xi + \xi = \mu\,, \hspace{10mm} \gamma \beta \le 2 \,, \hspace{10mm} \mu R^{(t)} = R^{(t+1)}\,. \]

\emph{Convergence (\ref{eq:conv})}: First, we observe that $\frac{2\delta}{\xi} + \xi = \frac{1}{2}(1 + \xi) = 1 - \frac{1}{\kappa + 1} = \mu$
and $\gamma \beta \le 2$. We now have that
\begin{align*}
\norm{x^{(t+1)} - x^*}_2 & = \norm{x^{(t)} - \gamma q^{(t)} + \gamma \nabla F(x^{(t)}) -  \gamma \nabla F(x^{(t)}) + x^*}_2             & (\text{def.})    \\
                       & \le \norm{\gamma q^{(t)} - \gamma \nabla F(x^{(t)})}_2 +  \norm{(x^{(t)} -  \gamma \nabla F(x^{(t)})) - x^*}_2 & (\text{triangle-i.e.})    \\
                       & \le \gamma \norm{\nabla F(x^{(t)})- q^{(t)}}_2 + \xi\norm{x^{(t)} - x^*}_2                                     & (\text{norm, Thm.~\ref{thm:gd}})  \\
                       & \le \gamma \delta R^{(t)} + \xi\mu^t W                                                                         & (\text{by Q1, Q3 for $t$})    \\
                       & = (\gamma \beta \delta / \xi + \xi)  \mu^t W                                                                   & (\text{expand $R^{(t)}$})    \\
                       & \le (2\delta/\xi + \xi) \mu^t W = \mu^{t+1}W\,.                                                                & (\text{$\gamma \beta \le 2$})
\end{align*}

\emph{Local quantisation (\ref{eq:local-apx})}:  First, let us observe that to prove that (\ref{eq:local-apx}) holds for $t+1$, it is sufficient to show $\norm{\nabla f_i(x^{(t+1)}) - q^{(t)}_i}_2 \le R^{(t+1)}/N$, as the claim then follows from the definition of $q^{(t+1)}_i$ and Corollary~\ref{lemma:quant}. We have
\begin{align*}
\norm{\nabla f_i(x^{(t+1)}) - q^{(t)}_i}_2 & = \norm{\nabla f_i(x^{(t+1)}) - \nabla f_i(x^{(t)}) + \nabla f_i(x^{(t)})  - q^{(t)}_i}_2  &    \\
                       & \le \norm{\nabla f_i(x^{(t+1)}) - \nabla f_i(x^{(t)})}_2  + \norm{\nabla f_i(x^{(t)})  - q^{(t)}_i}_2          & (\text{triangle-i.e.})    \\
                       & \le \beta_0 \norm{x^{(t+1)} - x^{(t)}}_2  + \delta R^{(t)} /N                                                  & (\text{smoothness, Q3})    \\
                       & \le \beta_0 \bigl(\norm{x^{(t+1)} - x^{*}}_2 + \norm{x^{(t)} - x^{*}}_2\bigr)  + \delta R^{(t)} /N             & (\text{triangle-i.e.})    \\
                       & \le 2\beta_0 \mu^t W  + \delta R^{(t)} /N                                                                      & (\text{by Q1 for $t$, $t+1$})    \\
                       & = 2\beta \mu^t W/N  + \delta R^{(t)} /N                                                                        & (\text{$\beta = \beta_0 N$})    \\
                       & = \xi R^{(t)}/ N  + \delta R^{(t)} /N                                                                          & (\text{definition of $R^{(t)}$})\\
                       & = (\xi + \delta) R^{(t)}/ N                                                                                    & (\text{rearrange})\\
                       & \le (\xi + 2\delta/\xi) R^{(t)}/ N                                                                             & (\text{$2/\xi \ge 1$})\\
                       & = \mu R^{(t)}/N = R^{(t+1)}/N\,.                                                                               & (\text{$2\delta/\xi + \xi = \mu$})
\end{align*}

\emph{Global quantisation (\ref{eq:gapx})}: To prove (\ref{eq:gapx}), we start by giving two auxiliary inequalities. First, we prove that $\norm{\nabla F(x^{(t+1)}) - r^{(t+1)}}_2 \le \delta R^{(t+1)}/2$:
\begin{align*}
\norm{\nabla F(x^{(t+1)}) - r^{(t+1)}}_2 & = \norm{\sum_{i = 1}^N \nabla f_i(x^{(t+1)}) - \sum_{i = 1}^N q_i^{(t+1)}}_2 & (\text{def.})   \\
                          & \le \sum_{i = 1}^N \norm{ \nabla f_i(x^{(t+1)}) - q_i^{(t+1)}}_2                            & (\text{triangle-i.e.})    \\
                          & \le N \delta R^{(t+1)}/2N = \delta R^{(t+1)}/2\,.                                         & (\text{by Q2 for $t+1$})    \\
\end{align*}                                                                                                         
Next, we want to prove $\norm{r^{(t+1)} - q^{(t+1)}}_2 \le \delta R^{(t+1)}/2$. Again, it is sufficient to show $\norm{r^{(t+1)} - q^{(t)}}_2 \le 2 R^{(t+1)}$, as the claim then follows from the definition of $q^{(t+1)}$ and Corollary~\ref{lemma:quant}. We have
\begin{align*}
\norm{r^{(t+1)} -  q^{(t)}}_2 & = \norm{r^{(t+1)} + \nabla F(x^{(t+1)}) - \nabla F(x^{(t+1)}) + \nabla F(x^{(t)}) - \nabla F(x^{(t)}) - q^{(t)}}_2 \\
                              & \le \norm{r^{(t+1)} - \nabla F(x^{(t+1)})}_2 + \norm{\nabla F(x^{(t+1)}) - \nabla F(x^{(t)})}_2 +\norm{\nabla F(x^{(t)}) - q^{(t)}}_2 \\
                              & \le \delta R^{(t+1)}/2 + \beta\norm{x^{(t+1)} - x^{(t)}}_2 + \delta R^{(t)}\,,
\end{align*}
where the last inequality follows from smoothness of $F$, equation (\ref{eq:local-apx}) for $t + 1$ and equation (\ref{eq:gapx}) for $t$. It holds that 
\begin{align*}
\beta\norm{x^{(t+1)} - x^{(t)}}_2 + \delta R^{(t)} & \le \beta \bigl(\norm{x^{(t+1)} - x^{*}}_2 + \norm{x^{(t)} - x^{*}}_2\bigr) + \delta R^{(t)}       & (\text{triangle-i.e.}) \\
                                                   & \le 2 \beta \mu^t W + \delta R^{(t)}                                                               & (\text{by Q1 for $t$, $t+1$}) \\
                                                   & = \xi R^{(t)} + \delta R^{(t)}                                                                     & (\text{definition of $R^{(t)}$})\\
                                                   & \le (\xi + 2\delta/\xi) R^{(t)}                                                                    & (\text{$2/\xi \ge 1$})\\
                                                   & = \mu R^{(t)} = R^{(t+1)}\,. 
\end{align*}
Combining the two previous inequalities, we have
\[\norm{r^{(t+1)} -  q^{(t)}}_2 \le \delta R^{(t+1)}/2 + R^{(t+1)} \le 2 R^{(t+1)}\,,\]
as desired.

Finally, putting things together, we have
\begin{align*}
\norm{\nabla F(x^{(t+1)}) - q^{(t+1)}}_2  & = \norm{\nabla F(x^{(t+1)}) - r^{(t+1)} + r^{(t+1)} - q^{(t+1)}}_2 \\
                                      & \le \norm{\nabla F(x^{(t+1)}) - r^{(t+1)}}_2 + \norm{r^{(t+1)} - q^{(t+1)}}_2 \\
                                      & \le \delta R^{(t+1)}/2 + \delta R^{(t+1)}/2 = \delta R^{(t+1)}\,,
\end{align*}
completing the proof.
\end{proof}

\begin{lemma}\label{lemma:conv-speed}
For any $\varepsilon > 0$ and $t \ge (\kappa + 1) \log \frac{W}{\varepsilon}$, we have $\norm{ x^{(t)} - x^*}_2 \le \varepsilon$.
\end{lemma}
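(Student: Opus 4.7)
}
The plan is to reduce the statement directly to inequality (\ref{eq:conv}) of Lemma~\ref{lemma:qgd-ineq}, which has already established that $\norm{x^{(t)} - x^*}_2 \le \mu^t W$ with $\mu = 1 - \frac{1}{\kappa+1}$. The lemma will therefore follow once we show that $\mu^t W \le \varepsilon$ whenever $t \ge (\kappa+1)\log(W/\varepsilon)$; this is just a calculus fact about the convergence rate $\mu$.

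First I would invoke (\ref{eq:conv}) to get the clean geometric bound $\norm{x^{(t)} - x^*}_2 \le \mu^t W$. Then I would apply the standard inequality $1 - x \le e^{-x}$ for $x \in [0,1]$ with $x = 1/(\kappa+1)$ to conclude
\[
\mu^t = \Bigl(1 - \tfrac{1}{\kappa+1}\Bigr)^t \le e^{-t/(\kappa+1)}.
\]
Substituting the assumed lower bound $t \ge (\kappa+1)\log(W/\varepsilon)$ into the exponent yields $e^{-t/(\kappa+1)} \le \varepsilon/W$, and multiplying through by $W$ gives $\mu^t W \le \varepsilon$. Chaining this with the geometric bound from (\ref{eq:conv}) produces $\norm{x^{(t)}-x^*}_2 \le \varepsilon$, completing the argument.

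There is essentially no obstacle: the heavy lifting was done in Lemma~\ref{lemma:qgd-ineq}, where the delicate choice of $\delta$, $R^{(t)}$ and $\mu$ was arranged precisely so that the quantised update contracts by exactly $\mu$ per step. The only thing worth double-checking is that the hypothesis $\kappa \ge 2$ assumed in the analysis ensures $\mu \in [1/2, 1)$, so the inequality $1 - x \le e^{-x}$ applies in the right regime and no additional corrections (e.g.\ ceilings on $t$) are needed beyond the stated lower bound on $t$.
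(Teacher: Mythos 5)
Your proposal is correct and follows essentially the same route as the paper's proof: invoke (\ref{eq:conv}) for the bound $\mu^t W$, apply $1-x \le e^{-x}$ (the paper writes this as $\mu^t = (1-(1-\mu))^t \le e^{-(1-\mu)t}$), and substitute $t \ge \frac{1}{1-\mu}\log\frac{W}{\varepsilon} = (\kappa+1)\log\frac{W}{\varepsilon}$. No differences worth noting.
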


\begin{proof}
By Lemma~\ref{lemma:qgd-ineq}, we have $\norm{x^{(t)} - x^*}_2 \le \mu^t W = (1 - (1-\mu))^t W \le e^{-(1-\mu)t} W$. Assuming $t \ge \frac{1}{1 - \mu} \log \frac{W}{\varepsilon}$, we have
\[ e^{-(1-\mu)t}W \le e^{-(1-\mu)(1-\mu)^{-1} \log W/\varepsilon}W = e^{\log \varepsilon/W}W = \varepsilon W / W = \varepsilon\,.\]
The claim follows by observing that $\frac{1}{1-\mu} = \kappa + 1 $ by definition.
\end{proof}

\paragraph{Communication cost.} 
Finally, we analyse the distributed implementation described at the beginning of this section, and analyse its total communication cost. Recall that we assume that the parameters $\alpha$ and $\beta$ are known to all nodes, so the parameters of the quantised gradient descent can be computed locally, and use $W = d^{1/2}$. Note that $W$ is the only parameter depending on the input domain, so the algorithm also applies for arbitrary convex domain $\mathbb{D} \subseteq \Reals^d$, setting $W$ to be the diameter of $\mathbb{D}$.

Since $\delta < 1$, we have by Lemma~\ref{lemma:quant} that the each of the messages sent by the nodes has length at most $O(d \log \delta^{-1})$ bits. Assuming $\kappa \ge 2$, we have $\xi \ge 1/3$ and
\[ \log \delta^{-1} = \log \frac{2(\kappa + 1) }{\xi} \le \log 6(\kappa + 1) \le \log 7\kappa \,.\]
Since the nodes send a total of $2N$ messages of $O(d \log \kappa)$ bits each, the total communication cost of a single round is $O(Nd\log \kappa)$ bits.


\section{Discussion and future work}

We have provided the first tight bounds on the communication complexity of optimising sums of quadratic functions in the $N$-party model with a coordinator. Our results are algorithm-independent, and immediately imply the same lower bound for the practical parameter server and decentralised models of distributed optimisation. 

In terms of future work, we expect that the randomised lower bound could be improved to match the deterministic one even for small $d$, possibly via reduction from a suitable \emph{gap problem} in communication complexity (e.g. \citet{gap-hamming}). 
Another avenue for future work is to investigate tight upper and lower bounds in the case where the functions being optimised are not quadratics, as isolating the ``right'' dependency on the condition number does not appear immediate. The recent results of~\cite{alimisis2021communication, islamov2021distributed} suggest that the dependency on the condition number may be quite small, and therefore hard to capture without explicit limitations on the algorithm.  
Finally, understanding the exact complexity of optimisation in the \emph{broadcast model}, where each message sent is seen by all nodes, and the complexity is measured by the number of bits sent, remains open.

\subsection*{Acknowledgments}
We thank the NeurIPS reviewers for insightful comments that helped us improve the positioning of our results, as well as for pointing out the subsampling approach for complementing the randomised lower bound. We also thank Foivos Alimisis and Peter Davies for useful discussions. 

This project has received funding from the European Research Council (ERC) under the European Union's Horizon 2020 research and innovation programme (grant agreement No 805223 ScaleML).

\bibliographystyle{plainnat}
\bibliography{optimisation}

\clearpage
\appendix

\section{Lower bound for non-convex functions}\label{sec:non-convex}

We now show a simple lower bound for optimisation over non-convex objective functions. We reduce from the $N$-player \emph{set disjointness} over universe of size $d$, denoted by $\mathsf{DISJ}_{d,N}$: each player $i$ is given an input $b_i \in \{ 0, 1 \}^d$, and the coordinator needs to output $0$ if there is a coordinate $\ell \in [d]$ such that $b_i(\ell) = 1$ for all $i \in [N]$, and $1$ otherwise.

\begin{theorem}[\cite{braverman2013tight}]
For $\delta > 0$, $N \ge 1$ and $d = \omega(\log N)$, the randomised communication complexity of set disjointness is  $\rcc^\delta(\mathsf{DISJ}_{d,N}) = \Omega(Nd)$.
\end{theorem}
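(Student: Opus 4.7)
The plan is to prove the bound via the information-complexity and direct-sum paradigm, in the style of Bar-Yossef--Jayram--Kumar--Sivakumar with the multi-party refinements due to Chakrabarti--Khot--Sun, Gronemeier, and Braverman et al.\ themselves. By Yao's lemma it suffices to exhibit a hard input distribution $\mu$ and lower-bound $\dcc^\delta_\mu(\mathsf{DISJ}_{d,N})$. I would take $\mu$ to be a product over the $d$ coordinates, where each marginal $\mu_\ell$ picks a uniformly random ``hole'' player $J_\ell \in [N]$, sets that player's $\ell$-th bit to $0$, and sets every other player's $\ell$-th bit to $1$ independently with probability $1/2$. Under $\mu$ almost every input is a YES-instance (disjoint), but correctness forces the protocol to detect, at some coordinate, whether the hole has been ``plugged'' by flipping $J_\ell$'s bit to $1$ -- which is precisely the hard single-coordinate primitive.

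The next step is a direct-sum reduction on external (conditional) information cost: the information cost of any correct protocol $\Pi$ for $\mathsf{DISJ}_{d,N}$ under $\mu$ is at least the sum over $\ell \in [d]$ of the information cost of the induced single-coordinate protocol. Since external information cost lower bounds communication, this reduces the problem to proving an $\Omega(N)$ information lower bound for the one-bit-per-player single-coordinate primitive, after which summing over $\ell$ and applying Yao's lemma completes the proof.

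The main obstacle, and the technical heart of the argument, is the single-coordinate $\Omega(N)$ bound. The canonical route is a Hellinger / cut-and-paste argument: letting $\tau_j$ denote the transcript distribution when the hole is at player $j$ (all other bits being $1$) and $\tau_\star$ denote the transcript distribution when every player's bit is $1$, correctness forces $\tau_\star$ to be statistically far (in total variation) from at least one $\tau_j$, and a symmetrisation argument across $j \in [N]$ together with the subadditivity properties of squared Hellinger distance converts this into an $\Omega(1)$ average squared Hellinger distance per $j$. By a Pinsker-type inequality and the chain rule for mutual information, this in turn contributes $\Omega(1)$ to the information revealed about player $j$'s bit by the transcript, and summing over $j$ yields an $\Omega(N)$ information cost.

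Combining the direct-sum reduction with this single-coordinate bound gives $\dcc^\delta_\mu(\mathsf{DISJ}_{d,N}) = \Omega(Nd)$, and Yao's lemma then yields the claimed $\rcc^\delta(\mathsf{DISJ}_{d,N}) = \Omega(Nd)$. The hypothesis $d = \omega(\log N)$ is only used at the very end, when translating the information lower bound back into a communication lower bound, in order to absorb the $O(\log N)$ per-message addressing overhead inherent to the coordinator model and to rule out degenerate ``guess a coordinate'' strategies.
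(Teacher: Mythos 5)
The paper does not actually prove this statement: it is imported as a black box from \cite{braverman2013tight}, so there is no in-paper proof to compare against, and your proposal has to be judged against the argument in that reference. At the level of architecture you are on the right track --- Yao's lemma, a per-coordinate hard distribution with a random ``hole'' player, a direct-sum reduction on information cost, and a single-coordinate primitive is indeed the BJKS-style paradigm that \cite{braverman2013tight} follow.

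The genuine gap is exactly at the step you label the technical heart, the single-coordinate $\Omega(N)$ bound, and it is exactly where the cited paper's main innovation lies. Under your distribution $\mu_\ell$, conditioned on the hole being at some player $j' \neq j$, flipping player $j$'s bit never changes the value of the coordinate-AND, so correctness imposes no distinguishing requirement on those input pairs; the only pair for which flipping $X_j$ changes the answer is ``$j$ is the hole and all other $N-1$ bits equal $1$'', an event of probability roughly $2^{-(N-1)}$. Consequently the standard accounting --- lower-bounding $I(\Pi; X_j \mid \cdot)$ by Hellinger distances between transcript distributions on inputs differing only in $X_j$, averaged over the conditioning --- contributes an exponentially small, not constant, amount per player, and summing over $j$ gives $\Omega(1)$, not $\Omega(N)$, per coordinate (hence only about $\Omega(d)$ overall). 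Your appeal to ``correctness forces $\tau_\star$ to be far from each $\tau_j$'' plus symmetrisation does not repair this: the all-ones input has essentially no mass under the hard distribution, and the two-party cut-and-paste property that lets BJKS convert correctness on such off-support inputs into information cost does not transfer in this simple form to $N$-party coordinator protocols; the whole difficulty is to show the protocol must reveal $\Omega(1)$ information about players it may never even exchange a message with, which \cite{braverman2013tight} handle with substantially different machinery rather than a direct Hellinger symmetrisation. Finally, your closing claim about where the hypothesis $d = \omega(\log N)$ enters (absorbing ``addressing overhead'') is speculation rather than argument. As written, the proposal establishes something closer to $\Omega(d)$ than the claimed $\Omega(Nd)$.
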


Again consider for fixed $\varepsilon$, $d$ and $\beta$ the set $S$ given by Lemma~\ref{lemma:balls} with $\delta = 2\varepsilon/\beta$. This gives a set $S$ with size at least $(\beta d^{1/2}/2C\varepsilon)^{d} = \exp(\Omega(d \log (\beta d)/\varepsilon )$. Let us identify the points in $S$ with indices in $[\size{S}]$. For a binary string $b \in \{ 0, 1 \}^{\size{S}}$, define the function $f_b$ by
\[ f_b(x) = 
\begin{cases}
\beta \norm{x - s}_2 & \text{ if $\norm{x - s}_2 < \varepsilon/\beta$ for $s$ with $b_s = 1$,} \\
\varepsilon & \text{ otherwise.}
\end{cases} \]
Since the distance between points in $S$ is at least $2\varepsilon/\beta$, the functions $f_T$ are well-defined, continuous and $\beta$-Lipschitz.

\begin{theorem}\label{thm:non-convex}
Given parameters $N$, $d$, $\varepsilon$ and $\beta$ satisfying $d\beta/\varepsilon = \Omega(1)$ and $(\beta d^{1/2}/2C\varepsilon)^{d} = \omega(\log N)$, any protocol solving \ref{eqn:main} with error probability  $\delta > 0$  when the inputs are guaranteed to be functions $f_b$ for $b \in \{ 0, 1 \}^{\size{S}}$  has communication complexity $N\exp(\Omega(d \log (\beta d)/\varepsilon ))$.  
\end{theorem}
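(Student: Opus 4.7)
The plan is to reduce $N$-party set disjointness over a universe of size $|S|$ to the optimisation problem, exploiting the cited lower bound $\rcc^\delta(\mathsf{DISJ}_{|S|,N}) = \Omega(N|S|)$; the hypothesis $(\beta d^{1/2}/2C\varepsilon)^d = \omega(\log N)$ is there exactly to ensure $|S| = \omega(\log N)$ so the disjointness bound applies, and $|S| \ge \exp(\Omega(d\log(\beta d/\varepsilon)))$ delivers the claimed rate.

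First I would analyse the landscape of $F = \sum_{i=1}^{N} f_{b_i}$. Since distinct points of $S$ are $> 2\varepsilon/\beta$ apart, every $x$ is within $\varepsilon/\beta$ of at most one $s \in S$, so $F$ decomposes cleanly: on the ball of radius $\varepsilon/\beta$ around $s$, $f_{b_i}(x) = \beta\norm{x-s}_2$ if $b_i(s)=1$ and $f_{b_i}(x) = \varepsilon$ otherwise; outside the union of these balls, every $f_{b_i}(x) = \varepsilon$. Letting $k_s = |\{i : b_i(s)=1\}|$, two cases emerge. In the \emph{intersecting} case, some $s^*$ has $k_{s^*}=N$, giving $F(s^*)=0$, $\inf F = 0$, and $F(x) = N\beta\norm{x-s^*}_2$ in a neighbourhood of $s^*$. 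In the \emph{disjoint} case, $k_s \le N-1$ for all $s$, so at every $x$ at least one $f_{b_i}(x) = \varepsilon$ and hence $\inf F \ge \varepsilon$.

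Next I would use this to locate any $\varepsilon$-approximate minimiser $z$ produced by the protocol. In the intersecting case, $F(z) \le \varepsilon$ combined with $F(x) = N\beta\norm{x-s^*}_2$ near $s^*$ forces $\norm{z - s^*}_2 \le \varepsilon/(N\beta)$. In the disjoint case, if $z$ happens to lie within $\varepsilon/(N\beta)$ of some $s \in S$, then necessarily $k_s < N$ (otherwise the instance would be intersecting), so at least one node has $b_i(s)=0$. The reduction then runs as follows: each node $i$ feeds $f_{b_i}$ into the optimisation protocol $\Pi$; the coordinator receives $z$, computes the closest $s \in S$ locally, and outputs ``disjoint'' immediately if $\norm{z-s}_2 > \varepsilon/(N\beta)$ (safe, because the intersecting case is ruled out); otherwise it transmits the index of $s$ to every node in $O(N\log|S|)$ bits, collects one bit $b_i(s)$ from each node, and declares ``intersecting'' iff all $b_i(s)=1$.

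The simulated disjointness protocol has error at most that of $\Pi$ and communication cost $C_\Pi + O(N\log|S|)$. Since $\log|S| = \Theta(d\log(\beta d/\varepsilon))$ while $|S| = \exp(\Omega(d\log(\beta d/\varepsilon)))$, the additive overhead is subdominant, and $\Omega(N|S|) \le C_\Pi + O(N\log|S|)$ gives $C_\Pi = \Omega(N|S|) = N\exp(\Omega(d\log(\beta d/\varepsilon)))$. The main subtle step is the landscape/localisation argument: without the observation that $F$ is $N\beta$-Lipschitz near $s^*$ (so that $\varepsilon$-approximation pins $z$ inside a radius-$\varepsilon/(N\beta)$ ball disjoint from the $\varepsilon/\beta$-neighbourhoods of other points in $S$), the coordinator could not unambiguously recover the candidate $s$ to query, and the reduction would break.
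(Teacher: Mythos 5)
Your reduction uses the same source problem and the same cost accounting as the paper's proof: $N$-party set disjointness over a universe of size $\size{S}$, with the hypothesis $(\beta d^{1/2}/2C\varepsilon)^d = \omega(\log N)$ ensuring the $\Omega(N\size{S})$ disjointness bound applies, and a verification overhead of order $N\log\size{S}$ that is subdominant. The landscape analysis of $F=\sum_i f_{b_i}$ is also correct. Where you differ is the verification step: the paper has the coordinator broadcast $z$ and collect quantised local values $f_i(z)$, deciding from the total value, whereas you decode the nearest $s\in S$ to $z$ and query the single bit $b_i(s)$ from each node. That step, as you justify it, has a genuine gap.

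The localisation claim ``in the intersecting case $F(z)\le\varepsilon$ forces $\norm{z-s^*}_2\le \varepsilon/(N\beta)$'' is false. Write $k_s$ for the number of players $i$ with $b_i(s)=1$. In an intersecting instance the guarantee is only $F(z)\le \inf F+\varepsilon=\varepsilon$, and $F(s')=\varepsilon$ holds exactly for any $s'\in S$ with $k_{s'}=N-1$; such a point is therefore a perfectly valid output of $\Pi$ even though it is not a common element. All that follows from $F(z)\le\varepsilon$ is that $z$ lies within $\varepsilon/(N\beta)$ of some $s$ with $k_s=N$, or exactly at some $s'$ with $k_{s'}=N-1$. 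In the latter case your coordinator decodes $s'$, passes the distance test (so your early ``disjoint'' exit is fine, but the final test is not), queries the bits, sees a zero, and outputs ``disjoint'' on an intersecting instance. Since the theorem must hold against every protocol solving (\ref{eqn:main}) -- including one that prefers such knife-edge outputs whenever they attain value $\inf F + \varepsilon$ -- your claim that the simulated disjointness protocol has error at most that of $\Pi$ is unjustified. Moreover, no decision rule based only on $z$ and information queried at $z$ can close this as stated: a disjoint instance containing an $(N-1)$-covered element $s'$ can legitimately produce the very same output $z=s'$ with the same value $F(z)=\varepsilon$. The repair is to build constant-factor slack into the gadget or the tolerance, e.g.\ defining $f_b$ with plateau value $2\varepsilon$ (and correspondingly larger ball radius and point separation, which does not change the asymptotics of $\size{S}$); then every $\varepsilon$-approximate minimiser of an intersecting instance genuinely lies within $\varepsilon/(N\beta)$ of a common element, and your bit-query verification (and the paper's value-based one) goes through. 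Over the exact family $f_b$ fixed in the statement, the argument as written does not.
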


\begin{proof}
Assume there is a protocol $\Pi$ with the properties stated in the claim, and worst-case communication cost $C_{\Pi}$. We now show that we can use $\Pi$ to solve set disjointness over universe of size $\size{S}$ with $C_{\Pi}$ total communication, which implies
\[ C_{\Pi} \ge \rcc^{\delta}(\mathsf{DISJ}_{\size{S},N}) = \Omega(N\exp(\Omega(d \log (\beta d)/\varepsilon ))\,,\]
yielding the claim.

First, we note that after running $\Pi$, the coordinator can send the final estimate $z$ of the optimum to all nodes, and receive approximations of the local function values $f_i(z)$ from all nodes with additive $O(Nd\log d\beta/\varepsilon)$ overhead, e.g. using quantisation of Corollary~\ref{lemma:quant}. We can without loss of generality assume that this does not exceed the total communication cost of $\Pi$.

For $b_1, b_2, \dotsc b_N \in \{ 0, 1 \}^{\size{S}}$ that all contain $1$ in some position $s$, then we have  $\sum_{i = 1}^N f_{b_i}(x) = 0$. Otherwise, for any point $x \in [0,1]^d$, consider the closest point $s \in S$ to $x$; there is at least one $b_i$ with $b_s = 0$, and for that function $f_{b_i}(x) = \varepsilon$ by definition. Thus, if $b_1, b_2, \dotsc b_N$ are a YES-instance for set disjointness, then $\inf_{x \in [0,1]^d} \sum_{i = 1}^N f_{b_i}(x) \ge \varepsilon$, and if $b_1, b_2, \dotsc b_N$ are a NO-instance, then $\inf_{x \in [0,1]^d} \sum_{i = 1}^N f_{b_i}(x) = 0$.

By definition, $\Pi$ can be used to distinguish between the two cases, and thus to solve set disjointness.
\end{proof}

\section{Subsampling}\label{app:subsampling}

In this section, we show that the condition $\beta d / N^2 \varepsilon = \Omega(1)$ in our main lower bound is, to a degree, necessary.

\begin{lemma}\label{lemma:subsampling}
Let $S = \{ x_1, x_2, \dotsc, x_N \} \subseteq [0,1]^d$, and let $X_1, X_2, \dotsc, X_M$ be i.i.d. random variables, each selected uniformly at random from $S$. Writing $\hat{x} = \frac{1}{N} \sum_{i = 1}^N x_i $ and $X = \frac{1}{M} \sum_{i = 1}^M X_i$, we have
\[ \E\bigl[\| X - \hat{x} \|_2\bigr] = 0\,, \hspace{10mm} \text{and} \hspace{10mm} \Var\bigl(\| X - \hat{x} \|_2\bigr) = \frac{d}{M}\,. \]
\end{lemma}

\begin{proof}
The first part follows immediately by the definition of the expectation. For the second part, we first note that since all points within $[0,1]^d$ are at most $d^{1/2}$ apart, and thus by the definition of variance, it follows that
\begin{align*}
\Var(\| X - \hat{x} \|_2)  &= \E[\| X - \hat{x} \|^2_2] - \E[\| X - \hat{x} \|_2]^2 = \E[\| X - \hat{x} \|^2_2] - 0\\
						   & \le \E\Bigl[ \frac{1}{M^2} \sum_{i=1}^M \| X_i - \hat{x} \|^2_2 \Bigr] = \frac{1}{M^2}\sum_{i=1}^M \E\bigl[ \| X_i - \hat{x} \|^2_2 \bigr] \\
						   & \le \frac{1}{M^2} M d = \frac{d}{M}\,.
\end{align*}
\end{proof}

\begin{theorem}\label{thm:subsampling}
Assume the input functions $f_i$ of the nodes are promised to be quadratic functions $x \mapsto \beta_0 \norm{ x - x^*}^2_2$ for some constant $\beta_0 > 0$, let $\beta = \beta_0 N$, and assume we can select $M \le N$ to be an integer satisfying $\beta d / M \varepsilon \le 1/8$. The there is a randomised algorithm solving (\ref{eqn:main}) using
\[ O\Bigl(M d \log\frac{\beta d}{\varepsilon}\Bigr) \text{ bits of communication,} \]
with probability at least $1/2$.
\end{theorem}

\begin{proof}
We start by having the coordinator select a multiset $I$ of $M$ nodes uniformly at random with replacement. Let $\hat{x}$ denote the global optimum of $\sum_{i = 1}^N f_i$, and let $\hat{Y}$ be the random variable for the global optimum of $\sum_{i \in I} f_i$.  By Lemma~\ref{lemma:subsampling}, Chebyshev's inequality and the assumption $\beta d / N \varepsilon \le 1/8$, we have that
\[ \Pr\Bigl[ \| \hat{Y} - \hat{x} \|_2 \ge \frac{1}{2}\Bigl(\frac{\varepsilon}{\beta}\Bigr)^{1/2} \Bigr] \le \frac{4 d\beta}{\varepsilon M} \le 1/2 \,.  \]

Let $\hat{y}$ be the actualised value of $\hat{Y}$. We now apply the algorithm of Theorem~\ref{thm:gd-main} to find a point $z$ such that $\| z - \hat{y} \|_2 \le 1/2 (\varepsilon/\beta)^{1/2}$, where, if the multiset $I$ contains duplicates, those nodes simulate multiple copies of themselves. This uses $O(M d \log \beta d / \varepsilon)$ bits of communication. We now have with probability at least $1/2$ that $\| z - \hat{x} \| \le (\varepsilon / \beta)^{1/2}$, and thus $\sum_{i = 1}^N f_i(z) \le \sum_{i = 1}^N f_i(\hat{x}) + \varepsilon$.
\end{proof}

\end{document}